\def\Nbb{\mathbb{N}}
\def\Rbb{\mathbb{R}}
\def\Ebb{\mathbb{E}}
\def\st{\text{\rm s.\,t.\ }}
\newtheorem{theorem}{Theorem}
\newtheorem{lemma}{Lemma}
\title{Breakdown Point of Robust Support Vector Machine}
\author[1]{Takafumi Kanamori}
\author[2]{Shuhei Fujiwara}
\author[2]{Akiko Takeda}
\affil[1]{Nagoya University}
\affil[2]{The University of Tokyo}
\date{}
\begin{document}
\maketitle

\begin{abstract}
 The support vector machine (SVM) is one of the most successful learning methods for solving classification
 problems. Despite its popularity, SVM has a serious drawback, that is sensitivity to outliers in training samples. The
 penalty on misclassification is defined by a convex loss called the hinge loss, and the unboundedness of the convex
 loss causes the sensitivity to outliers. To deal with outliers, robust variants of SVM have been proposed, such as the
 robust outlier detection algorithm and an SVM with a bounded loss called the ramp loss. In this paper, we propose a
 robust variant of SVM and investigate its robustness in terms of the breakdown point. The breakdown point is a
 robustness measure that is the largest amount of contamination such that the estimated classifier still gives
 information about the non-contaminated data. The main contribution of this paper is to show an exact evaluation of the
 breakdown point for the robust SVM. 
 For learning parameters such as the regularization parameter in our algorithm, 
 we derive a simple formula that guarantees the robustness of the classifier. 
 When the learning parameters are determined with a grid search using cross validation, our formula works to reduce the
 number of candidate search points. The robustness of the proposed method is confirmed in numerical experiments. We show
 that the statistical properties of the robust SVM are well explained by a theoretical analysis of the breakdown point. 
\end{abstract}

\section{Introduction}
\label{sec:Introduction} 
Support vector machine (SVM) is a highly developed classification method that is widely used in real-world data
analysis~\cite{Cortes95support-vectornetworks,book:Schoelkopf+Smola:2002}. 
The most popular implementation is called $C$-SVM, which uses the maximum margin criterion with a penalty for
misclassification. 
The positive parameter $C$ tunes the balance between the maximum margin and penalty. 
As a result, the classification problem can be formulated as a convex quadratic problem based on training data. 
A separating hyper-plane for classification is obtained from the optimal solution of the problem. 
Furthermore, complex non-linear classifiers are obtained by using the reproducing kernel
Hilbert space (RKHS) as a statistical model of the classifiers~\cite{berlinet04:_reprod_hilber}. 
There are many variants of SVM for solving binary classification problems, 
such as $\nu$-SVM, E$\nu$-SVM, least square 
SVM~\cite{AdvLT:Perez-Cruz+etal:2003,NECO:Scholkopf+etal:2000,Suykens:1999:LSS:326394.326408}. 
Moreover, the generalization ability of SVM has been analyzed in many
studies~\cite{bartlett06:_convex_class_risk_bound,JMLR:Steinwart:2001,zhang04:_statis}.  

In practical situations, however, SVM has drawbacks. The remarkable feature of
the SVM is that the separating hyperplane is determined mainly from misclassified
samples. Thus, the most misclassified samples significantly affect the classifier, meaning 
that the standard SVM is extremely fragile to the presence of outliers. 
In $C$-SVM, the penalties of sample points are measured in terms of the hinge loss, which is a convex surrogate of the
0-1 loss for misclassification. 
The convexity of the hinge loss causes SVM to be unstable in the presence of outliers, 
since the convex function is unbounded and puts an extremely large penalty on outliers. 
One way to remedy the instability is to replace the convex loss 
with a non-convex bounded loss to suppress outliers. 
Loss clipping is a simple method to obtain a bounded loss from a convex loss~\cite{shen03:_learn,NIPS:Yu+etal:2010}. 
For example, clipping the hinge loss leads to the ramp
loss~\cite{collobert06:_tradin,yichaowu07:_robus_trunc_hinge_loss_suppor_vector_machin}. 

In mathematical statistics, robust statistical inference has been studied for a long time. 
A number of robust estimators have been proposed for many kinds of 
statistical problems~\cite{Hampel_etal86,huber81:_robus,maronna06:_robus_statis}. 
In mathematical analysis, one needs to quantify the influence of samples on estimators. 
Here, the influence function, change of variance, and breakdown point are often used as measures of robustness. 
In machine learning literature, these measures are used to analyze the theoretical properties of SVM and its robust
variants. In \cite{christmann04:_robus_proper_convex_risk_minim}, the robustness of a learning
algorithm using a convex loss function was investigated on the basis of an influence function
defined over an RKHS. When the influence function is uniformly bounded 
on the RKHS, the learning algorithm is regarded to be robust against outliers. It was
proved that the quadratic loss function provides a robust learning algorithm for
classification problems in this sense~\cite{christmann04:_robus_proper_convex_risk_minim}. 
From the standpoint of the breakdown point, however, convex loss functions do not provide
robust estimators, as shown in~\cite[Chap.~5.16]{maronna06:_robus_statis}. In
\cite{yu12:_polyn_form_robus_regres,NIPS:Yu+etal:2010}, Yu et al. showed a convex loss
clipping that yields a non-convex loss function and proposed a convex relaxation of the
resulting non-convex optimization problem to obtain a computationally efficient learning
algorithm. They also studied the robustness of the learning algorithm using the clipped loss. 

In this paper, we provide a detailed analysis on the robustness of SVMs. 
In particular, we deal with a robust variant of kernel-based $\nu$-SVM. 
The standard $\nu$-SVM~\cite{NECO:Scholkopf+etal:2000} has a regularization parameter $\nu$, and
it is equivalent with $C$-SVM; i.e., both methods provide the same classifier for the same
training data, if the regularization parameters, $\nu$ and $C$, are properly tuned. 
We also introduce a new robust variant called robust $(\nu,\mu)$-SVM that
has another learning parameter $\mu\in[0,1)$.  
The parameter $\mu$ denotes the ratio of samples to be removed from the training dataset as outliers. 
When the ratio of outliers in the training dataset is bounded above by $\mu$, 
robust $(\nu,\mu)$-SVM is expected to provide a robust classifier. 
Robust $(\nu,\mu)$-SVM is closely related to the robust outlier detection
(ROD) algorithm~\cite{xu06:_robus_suppor_vector_machin_train}. 
Indeed, ROD is to robust $(\nu,\mu)$-SVM what $C$-SVM is to
$\nu$-SVM~\cite{takedaar:_exten_robus_suppor_vector_machin}. 

Our main contribution is to derive the \emph{exact} finite-sample breakdown point of robust $(\nu,\mu)$-SVM. 
The finite-sample breakdown point indicates the largest amount of contamination such that the
estimator still gives information about the non-contaminated data~\cite[Chap.3.2]{maronna06:_robus_statis}. 
We show that the finite-sample breakdown point of robust $(\nu,\mu)$-SVM is equal to $\mu$, 
if $\nu$ and $\mu$ satisfy simple inequalities. 
Conversely, we prove that the finite-sample breakdown point is strictly less than $\mu$, if these key inequalities are
violated. 
The theoretical analysis partly depends on the boundedness of the kernel function used in the statistical model. 
As a result, one can specify the region of the learning parameters $(\nu,\mu)$ such that 
robust $(\nu,\mu)$-SVM has the desired robustness property. 
This property will be of great help to reduce the number of candidate learning parameters $(\nu,\mu)$, 
when the grid search of learning parameters is conducted  with cross validation. 

Some of previous studies are related to ours. In particular, the breakdown point was used to assess the robustness of
kernel-based estimators in \cite{yu12:_polyn_form_robus_regres}. 
In that paper, the influence of a single outlier is considered for a general class of robust estimators. 
In contrast, we focus on a variant of SVM and provide a detailed analysis of the robustness property based on the
breakdown point. In our analysis, an arbitrary number of outliers is taken into account. 

The paper is organized as follows. 
In Section~\ref{sec:Brief_Introduction}, we introduce the problem setup and briefly review the topic of learning
algorithms using the standard $\nu$-SVM. 
Section~\ref{sec:Robust_nu_SVM} is devoted to the robust variant of $\nu$-SVM. 
We show that the dual representation of robust $(\nu,\mu)$-SVM has an intuitive interpretation, that is of great help to
compute the breakdown point. 
An optimization algorithm is also presented. 
In Section~\ref{sec:Breakdown_Point}, we introduce a finite-sample breakdown point as a measure of robustness. 
Then, we evaluate the breakdown point of robust $(\nu,\mu)$-SVM. 
In Section~\ref{sec:Asymptotic_Properties}, we investigate the statistical asymptotic properties of the proposed method
on the basis of order statistics. 
Section~\ref{sec:Numerical_Experiments} examines the generalization performance of robust $(\nu,\mu)$-SVM via numerical
experiments. The conclusion is in Section~\ref{sec:Concluding_Remarks}. 
Detailed proofs of the theoretical results are presented in the Appendix. 

Let us summarize the notations used throughout this paper. Let $\Nbb$ be the set of natural numbers,
and let $[m]$ for $m\in\Nbb$ denote a finite set of $\Nbb$ defined as $\{1,\ldots,m\}$. 
The set of all real numbers is denoted as $\Rbb$. The function $[z]_+$ is defined as $\max\{z,0\}$ 
for $z\in\Rbb$. For a finite set $A$, the size of $A$ is expressed as $|A|$. 
For a reproducing kernel Hilbert space (RKHS) $\mathcal{H}$, 
the norm on $\mathcal{H}$ is denoted as $\|\cdot\|_\mathcal{H}$. 
See \cite{berlinet04:_reprod_hilber} for a description of RKHS.  
Let ${1_m}$ (resp. ${0_m}$) be an $m$-dimensional vector of all ones (resp. all zeros).

\section{Brief Introduction to Learning Algorithms}
\label{sec:Brief_Introduction}
Let us introduce the classification problem with an input space $\mathcal{X}$ and binary output labels $\{+1,-1\}$. 
Given i.i.d. training samples $D=\{(x_i,y_i):i\in[m]\}\subset\mathcal{X}\times\{+1,-1\}$ drawn
from a probability distribution over $\mathcal{X}\times\{+1,-1\}$, a learning algorithm
produces a decision function $g:\mathcal{X}\rightarrow\Rbb$ such that its sign provides 
a prediction of output labels for input points over test samples. 
The decision function $g(x)$ predicts the correct label on the sample $(x,y)$ if and only if the inequality $yg(x)>0$
holds. The product $yg(x)$ is called the margin of the sample $(x,y)$ for the decision
function~$g$~\cite{schapire98:_boost}. To make an accurate decision function, the margins on the training dataset should
take large positive values. 

In kernel-based $\nu$-SVM~\cite{NECO:Scholkopf+etal:2000}, an RKHS $\mathcal{H}$ endowed
with a kernel function $k:\mathcal{X}^2\to\Rbb$ is used to estimate the decision function
$g(x)=f(x)+b$, where $f\in\mathcal{H}$ and $b\in\Rbb$. 
The misclassification penalty is measured by the hinge loss. 
More precisely, $\nu$-SVM produces a decision function $g(x)=f(x)+b$ as the optimal solution of the
convex problem, 
\begin{align}
 \label{eqn:nu-svm}
 \begin{array}{l}
  \displaystyle
 \min_{f,b,\rho}\ \frac{1}{2}\|f\|_{\mathcal{H}}^2
 -\nu\rho+\frac{1}{m}\sum_{i=1}^{m}\big[\rho-y_i(f(x_i)+b\big)]_+\\
  \displaystyle
    \st\ f\in\mathcal{H},\ b,\rho\in\Rbb, 
 \end{array}
\end{align}
where $[\rho-y_i(f(x_i)+b\big)]_+$ is the hinge loss of the margin with the threshold $\rho$. 
The second term $-\nu\rho$ is the penalty for the threshold parameter~$\rho$. 
The parameter $\nu$ in the interval $(0,1)$ is the regularization parameter. 
Usually, the range of $\nu$ that yields a meaningful classifier is narrower than the interval $(0,1)$, as shown in 
\cite{NECO:Scholkopf+etal:2000,DBLP:conf/icml/TakedaS08}. 
The first term in \eqref{eqn:nu-svm} is a regularization term to avoid overfitting to the training data. 
A large positive margin is preferable for each training data. 
The representer theorem~\cite{berlinet04:_reprod_hilber,book:Schoelkopf+Smola:2002}
indicates that the optimal decision function of \eqref{eqn:nu-svm} is of the form, 
\begin{align}
 \label{eqn:f_kernel_representation}
 g(x)=\sum_{j=1}^{m}\alpha_jk(x,x_j)+b
\end{align}
for $\alpha_j\in\Rbb$. The input point $x_j$ with a non-zero coefficient $\alpha_j$ is called a support vector. The
regularization parameter $\nu$ provides a lower bound on the fraction of support vectors. 
Thanks to the representer theorem, even when $\mathcal{H}$ is an infinite dimensional space, the above optimization
problem can be reduced to a finite dimensional quadratic convex problem. 
This is the great advantage of using RKHS for non-parametric statistical 
inference~\cite{NECO:Scholkopf+etal:2000}. 

As pointed out in \cite{DBLP:conf/icml/TakedaS08}, $\nu$-SVM is closely related to a financial risk measure called
conditional value at risk (CVaR)~\cite{JBF:Rockafellar+Uryasev:2002}. 
Roughly speaking, the CVaR of samples $r_1,\ldots,r_m\in\Rbb$ at level $\nu\in(0,1)$ such that
$\nu{m}\in\Nbb$ is defined as the average of its $\nu$-tail, i.e.,
$\frac{1}{\nu{m}}\sum_{i=1}^{\nu{m}}r_{\sigma(i)}$, 
where $\sigma$ is a permutation on $[m]$ such that
$r_{\sigma(1)}\geq\cdots\geq{}r_{\sigma(m)}$ holds. 
In the literature, $r_i$ is defined as the negative margin $r_i=-y_ig(x_i)$. 
For a regularization parameter $\nu$ satisfying $\nu{m}\in\Nbb$ and a fixed decision function
$g(x)=f(x)+b$, the objective function in \eqref{eqn:nu-svm} is expressed as 
 \begin{align*}
&\phantom{=}  \min_{\rho\in\Rbb}
  \frac{1}{2}\|f\|_{\mathcal{H}}^2
  -\nu\rho+\frac{1}{m}\sum_{i=1}^{m}\big[\rho-y_i(f(x_i)+b\big)]_+ \\
&  =
 \frac{1}{2}\|f\|_{\mathcal{H}}^2+\nu\cdot\frac{1}{\nu{m}}\sum_{i=1}^{\nu{m}}r_{\sigma(i)}. 
\end{align*}
Details are presented in Theorem~10 of~\cite{JBF:Rockafellar+Uryasev:2002}. 
Hence, $\nu$-SVM yields a decision function that minimizes the sum of the regularization term and the CVaR of 
the negative margins at level $\nu$. 

In $C$-SVM, the decision function is obtained by solving 
\begin{align}
 \label{eqn:C-svm}
 \begin{array}{l}
  \displaystyle
  \min_{f,b}\ \frac{1}{2}\|f\|_{\mathcal{H}}^2 +C\sum_{i=1}^{m}\big[1-y_i(f(x_i)+b\big)]_+\\
  \displaystyle \st\ f\in\mathcal{H},\ b\in\Rbb. 
 \end{array}
\end{align}
Note that the threshold in the hinge loss is fixed to one in $C$-SVM, whereas $\nu$-SVM determines the threshold with
the optimal solution $\rho$. A positive regularization parameter $C>0$ is used instead of $\nu$. For each training data,
$\nu$-SVM and $C$-SVM can be made to provide the same decision function by appropriately tuning $\nu$ and $C$. 
In this paper, we focus on $\nu$-SVM and its robust variants rather than $C$-SVM. 
The parameter $\nu$ has the explicit meaning shown above, 
and this interpretation will be significant when we derive the robustness property of our method. 

In the robust $C$-SVM proposed
in~\cite{shen03:_learn,yichaowu07:_robus_trunc_hinge_loss_suppor_vector_machin,xu06:_robus_suppor_vector_machin_train}, 
the hinge loss $[1-y_i(f(x_i)+b)]_{+}$ in \eqref{eqn:C-svm} is replaced with the so-called 
ramp loss $\min\{1,\,[1-y_i(f(x_i)+b)]_{+}\}$. By truncating the hinge loss, the influence of outliers is suppressed,
and the estimated classifier is expected to be robust against outliers included in the training data.

\section{Robust $(\nu,\mu)$-SVM}
\label{sec:Robust_nu_SVM}
\subsection{Learning Algorithm of Robust $(\nu,\mu)$-SVM}
\label{subsec:Formulatoin_Robust_SVM}
Here, we propose a robust $(\nu,\mu)$-SVM that is a robust variant of $\nu$-SVM. 
To remove the influence of outliers, we introduce the outlier indicator,
$\eta_i\in\{0,1\},i\in[m]$, for each training sample, where $\eta_i=0$ is intended to indicate
that the sample $(x_i,y_i)$ is an outlier. The same idea is used in~\cite{xu06:_robus_suppor_vector_machin_train}. 
Assume that the ratio of outliers is less than or equal to $\mu$, and define the finite
set 
$E_\mu$ as 
\begin{align*}
 E_\mu=\big\{(\eta_1,\ldots,\eta_m)^T\in\{0,1\}^m\,:\,\sum_{i=1}^m\eta_i\geq{}m(1-\mu)\big\}. 
\end{align*}
For $\nu$ and $\mu$ such that $0<\mu<\nu<1$, robust $(\nu,\mu)$-SVM 
is formalized using RKHS $\mathcal{H}$ as
\begin{align}
\label{eqn:robust_nu_svm} 
 \begin{array}{l}
  \displaystyle
   \min_{f,b,\rho,{\eta}}\ 
   \frac{1}{2}\|f\|_{\mathcal{H}}^2 -(\nu-\mu)\rho
   +\frac{1}{m}\sum_{i=1}^{m}\eta_i\big[\rho-y_i\big(f(x_i)+b\big)\big]_+,  \\
  \displaystyle\ \st\ \ 
   f\in\mathcal{H},\ \ 
   {\eta}=(\eta_1,\ldots,\eta_m)^T\in{}E_\mu,\ \ b,\rho\in\Rbb. 
 \end{array}
\end{align}
The optimal solution, $f\in\mathcal{H}$ and $b\in\Rbb$, provides the decision function
$g(x)=f(x)+b$ for classification. 
Influence from samples with large negative margins is removed by setting $\eta_i$ to zero. 
Throughout the paper, we will assume that $\nu{m}$ and $\mu{m}$ are natural numbers to avoid technical difficulties. 

Robust $(\nu,\mu)$-SVM is closely related to the robust outlier detection (ROD) 
algorithm~\cite{xu06:_robus_suppor_vector_machin_train}. 
About modified algorithms of ROD and robust $(\nu,\mu)$-SVM, 
the equivalence is shown in~\cite{takedaar:_exten_robus_suppor_vector_machin}. 
In ROD, the classifier is given by the optimal solution of 
\begin{align}
 \label{eqn:ROD}
 \begin{array}{l}
  \displaystyle
   \min_{f,b,\eta}\frac{\lambda}{2}\|f\|_{\mathcal{H}}^2+\sum_{i=1}^{m}\eta_i[1-y_i(f(x_i)+b)]_{+}, \\
  \displaystyle \st\  f\in\mathcal{H},\ \ b\in\Rbb,\\
  \phantom{\st}\ {\eta}=(\eta_1,\ldots,\eta_m)^T\in{}[0,1]^m,\,\
   \sum_{i=1}^{m}\eta_i\geq{}m(1-\mu), 
 \end{array}
\end{align}
where $\lambda>0$ is a regularization parameter. In the original ROD, the linear kernel is used. 
To obtain the classifier, the ROD algorithm solves a semidefinite relaxation of the above problem. 

Furthermore, robust $(\nu,\mu)$-SVM is related to CVaR at levels $\nu$ and $\mu$. 
Indeed, for the parameters, $\nu$ and $\mu$, and a fixed decision function $g(x)=f(x)+b$, 
the objective function in \eqref{eqn:robust_nu_svm} is represented as 
\begin{align}
&\phantom{=}  
 \min_{\rho\in\Rbb,{\eta}\in{E_\mu}}\ 
   \frac{1}{2}\|f\|_{\mathcal{H}}^2 -(\nu-\mu)\rho
 +\frac{1}{m}\sum_{i=1}^{m}\eta_i\big[\rho+r_i\big]_+ \nonumber\\
&=
 \min_{\rho\in\Rbb}\ 
 \frac{1}{2}\|f\|_{\mathcal{H}}^2 -\nu\rho
 +\frac{1}{m}\sum_{i=1}^{m}\big[\rho+r_i\big]_+ 
 -\max_{{\eta}\in{E_\mu}}\frac{1}{m}\sum_{i=1}^m(1-\eta_i)r_i
 \label{eqn:DC-representation} \\
&=
 \frac{1}{2}\|f\|_{\mathcal{H}}^2 
 +(\nu-\mu)\cdot\frac{1}{(\nu-\mu)m}\sum_{i=\mu{m}+1}^{\nu{m}}\!\!r_{\sigma(i)},
 \label{eqn:diff_CVaR}
\end{align}
where $r_i=-y_i(f(x_i)+b)$ is the negative margin and $r_{\sigma(i)}$ is its sort in the
descending order defined in Section~\ref{sec:Brief_Introduction}. 
The second term in \eqref{eqn:diff_CVaR} is the average of the negative margins included
in the middle interval presented in Figure~\ref{fig:margin_hist}, 
and it is expressed by the difference of CVaRs at levels $\nu$ and $\mu$. 
The learning algorithm based on this interpretation is proposed
in~\cite{tsyurmasto13:_suppor_vector_class_posit_homog_risk_funct} under the name CVaR-$(\alpha_L,\alpha_U)$-SVM.  
The two methods can be shown to be equivalent by setting $\alpha_L=1-\nu$ and $\alpha_U=1-\mu$. 
In this paper, the learning algorithm based on \eqref{eqn:robust_nu_svm} is referred to as robust $(\nu,\mu)$-SVM to
emphasize that it is a robust variant of $\nu$-SVM.  

 \begin{figure}[t]
 \begin{center}
  \includegraphics[scale=0.4]{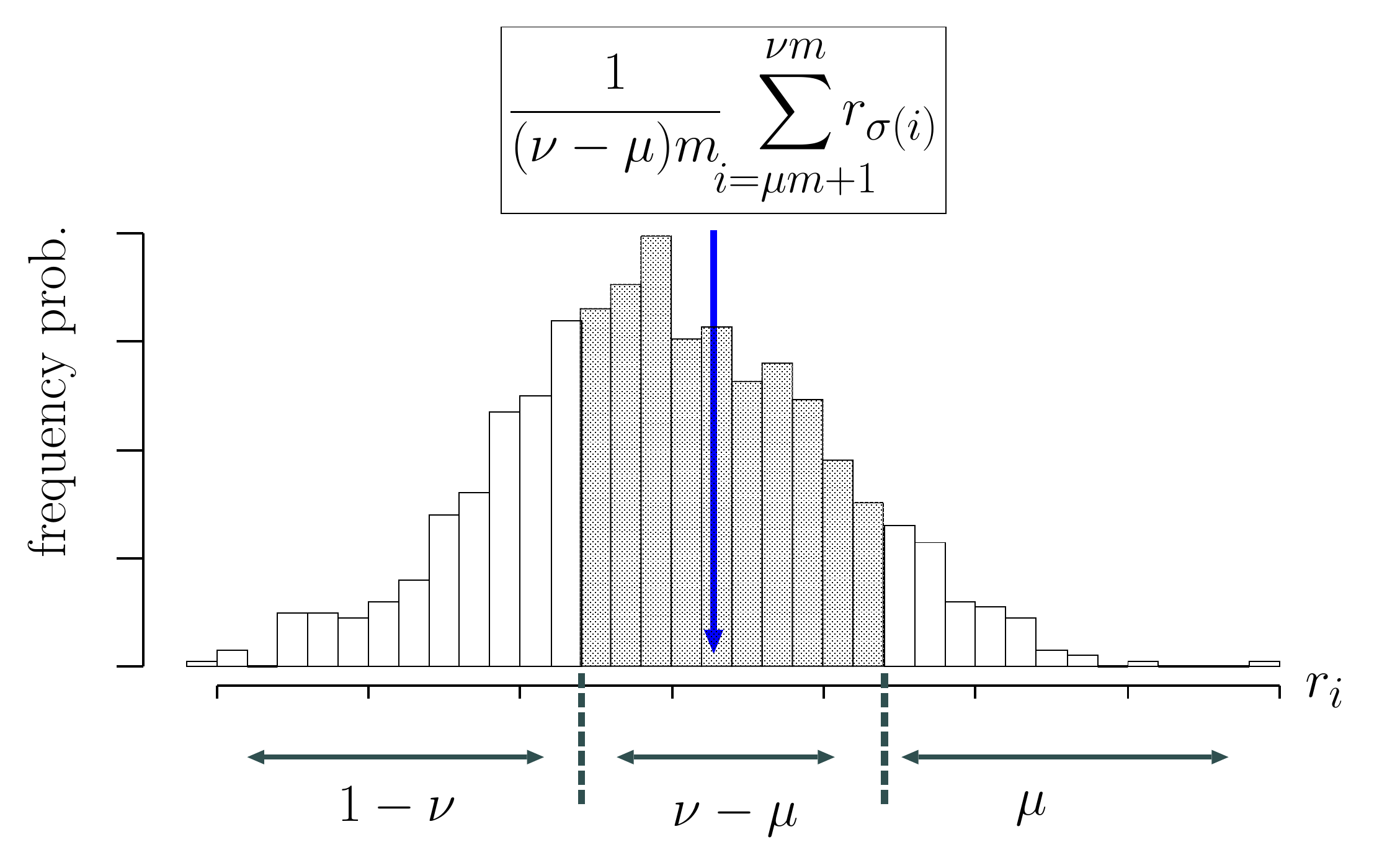}
 \end{center}
 \caption{Distribution of negative margins $r_i=-y_ig(x_i),i\in[m]$ for a fixed decision function $g(x)=f(x)+b$.} 
 \label{fig:margin_hist}
\end{figure}

The representer theorem ensures that the optimal decision function of \eqref{eqn:robust_nu_svm} 
is represented by $f(x)=\sum_{i=1}^{m}\alpha_ik(x,x_i)+b$ when the kernel function of the RKHS
$\mathcal{H}$ is given by $k(x,x')$. 
As in the case of the standard $\nu$-SVM, the number of support vectors, i.e., the input points $x_i$ such
that $\alpha_i\neq0$, is bounded below by $(\nu-\mu)m$. 
In addition, the KKT condition of \eqref{eqn:robust_nu_svm} leads to the fact that 
any support vector $x_i$ satisfies $\eta_i=1$. 

It is hard to obtain a global optimal solution of \eqref{eqn:robust_nu_svm}, since the objective
function is non-convex. 
As shown in~
\cite{tsyurmasto13:_suppor_vector_class_posit_homog_risk_funct,takedaar:_exten_robus_suppor_vector_machin}, 
the objective function in \eqref{eqn:robust_nu_svm} is
expressed as a difference of convex functions~(DC) by using a CVaR representation. 
Hence, the DC algorithm~\cite{Pham:1997} and convex-concave programming 
(CCCP)~\cite{NECO:Yuille+Rangarajan:2003} are available to efficiently obtain 
a stationary point of \eqref{eqn:robust_nu_svm}.  
The same approach is taken by robust $C$-SVM using the ramp loss~\cite{collobert06:_tradin}. 

In Algorithm~\ref{alg:DC_alg_robust_numuSVM}, the DC algorithm for robust $(\nu,\mu)$-SVM based on the expression
\eqref{eqn:DC-representation} is presented. 
The derivation of the DC algorithm is presented in Appendix~\ref{appendix:DCA}. 
Algorithm~\ref{alg:DC_alg_robust_numuSVM} is guaranteed to converge in a finite number of iterations. 
In the DC algorithm, a monotone decrease of the objective value is generally
guaranteed, and in Algorithm~\ref{alg:DC_alg_robust_numuSVM}, the objective value in each 
iteration is determined by ${\eta}\in{E_\mu}$, which can take only a finite number of distinct values. 
A stationary point is obtained when the objective value is unchanged. 
The above argument is based on a convergence analysis of robust $C$-SVM using the ramp loss~\cite{collobert06:_tradin}. 
In addition, an argument based on polyhedral DC programming shows that the algorithm converges after a finite number of
iterations~\cite{an05:_dc_differ_convex_funct_progr}. 
One can use another stopping rule such that the algorithm terminates when the same ${\eta}\in{E_\mu}$
is obtained in two consecutive iterations. 
If the cyclic phenomenon of ${\eta}$ is prohibited in some way, 
convergence in a finite number of iterations is guaranteed. 

\begin{algorithm}[t]
 \caption{DC algorithm for robust $(\nu,\mu)$-SVM}
 \label{alg:DC_alg_robust_numuSVM}                          
\begin{algorithmic}[1]
 \REQUIRE Gram matrix $K\in\Rbb^{m\times{m}}$ defined as $K_{ij}=k(x_i,x_j),i,j\in[m]$, 
 and
 training labels $y=(y_1,\ldots,y_m)^T\in\{+1,-1\}^m$. 
 The matrix $\widetilde{K}\in\Rbb^{m\times{m}}$ is defined as $\widetilde{K}_{ij}=y_iy_jK_{ij}$. 
 Let $g(x)=f(x)+b$ be an initial decision function. 
 \REPEAT
 \STATE 
 Compute the sort $r_{\sigma(1)}\geq\cdots\geq{r_{\sigma(m)}}$ of the negative margin $r_i=-y_ig(x_i)$, and 
 set 
 \begin{align*}
  \eta_{\sigma(i)}\leftarrow\begin{cases}
	   0, &  1\leq{i}\leq\mu{m},\\
	   1, &  \text{otherwise},
	  \end{cases}
 \end{align*}
 for $i\in[m]$. Let ${\eta}$ be $(\eta_1,\ldots,\eta_m)^T\in{E_\mu}$. 
 \STATE 
 Set $c\leftarrow-\widetilde{K}(1_m-{\eta})/m$ and 
 $d\leftarrow{{y}}^T(1_m-{\eta})/m$.  
 \STATE Compute the optimal solution ${\beta}_{\mathrm{opt}}$ of the problem
 \begin{align}
  \label{eqn:DC_alg_quadprob}
  \min_{{\beta}\in\Rbb^m}\frac{1}{2}{\beta}^T\widetilde{K}{\beta}+c^T{\beta},\quad
  \st {0}_m\leq{\beta}\leq{1}_m/m,\ \ {\beta}^T{ y}=d,\ \ {\beta}^T{1_m}=\nu. 
 \end{align}
 \STATE Set ${\alpha}\leftarrow{{y}}\circ({\beta}_{\mathrm{opt}}-(1_m-{\eta})/m)$, 
 where $\circ$ denotes component-wise multiplication of two vectors. 
 \STATE Compute $\rho$ and $b$ using 
 $0<\beta_i<1/m\ \Longrightarrow\ \rho=y_ig(x_i)$, where $g(x_i)=\sum_{j=1}^{m}K_{ij}\alpha_j+b$. 
 \UNTIL{the objective value of \eqref{eqn:robust_nu_svm} is unchanged.}
\RETURN the decision function $\displaystyle{}g(x)=\sum_{i=1}^m{}k(x,x_i)\alpha_i+b$. 
\end{algorithmic}
\end{algorithm}

\subsection{Dual Problem and Its Interpretation}
\label{subsec:Dual_Problem}

The partial dual problem of \eqref{eqn:robust_nu_svm} with a fixed outlier indicator
${\eta}=(\eta_1,\ldots,\eta_m)\in{E_\mu}$ has an intuitive geometric picture.  
Some variants of $\nu$-SVM can be geometrically interpreted on the basis of the dual
form~\cite{CriBur00,kanamori13:_conjug_relat_loss_funct_uncer,ICML2012Takeda_87}. 
Substituting \eqref{eqn:f_kernel_representation} into the objective function in 
\eqref{eqn:robust_nu_svm}, we obtain the Lagrangian of problem~\eqref{eqn:robust_nu_svm} with 
a fixed $\eta\in{E_\mu}$ as  
\begin{align*}
L_{\eta}(\alpha,b,\rho,\xi;\beta,\gamma)
&=
 \frac{1}{2}\sum_{i,j=1}^m\alpha_i\alpha_jk(x_i,x_j)
 -(\nu-\mu)\rho+\frac{1}{m}\sum_{i=1}^m\eta_i\xi_i
 -\sum_{i=1}^m\beta_i\xi_i\\
 &\phantom{=}
 +\sum_{i=1}^m\gamma_i\bigg(\rho-\xi_i-y_i\bigg(\sum_{j}k(x_i,x_j)\alpha_j+b\bigg)\bigg), 
\end{align*}
where non-negative slack variables $\xi_i,\,i\in[m]$ are introduced to 
represent the hinge loss. Here, the parameters $\beta_i$ and $\gamma_i$ for $i\in[m]$ are non-negative Lagrange 
multipliers.  
For a fixed ${\eta}\in{E_\mu}$, the Lagrangian is convex in the parameters
$\alpha,b,\rho$, and $\xi$ and concave in $\beta=(\beta_1,\ldots,\beta_m)$ and $\gamma=(\gamma_1,\ldots,\gamma_m)$. 
Hence, the min-max theorem~\cite[Proposition~6.4.3]{book:Bertsekas+etal:2003} yields 
\begin{align*}
 &\phantom{=}
 \inf_{\alpha,b,\rho,\xi} \sup_{\beta,\gamma\geq0} L_\eta(\alpha,b,\rho,\xi;\beta,\gamma)\\
&=
 \sup_{\beta,\gamma\geq0}\inf_{\alpha,b,\rho,\xi} L_\eta(\alpha,b,\rho,\xi;\beta,\gamma)\\ 
&=
  \sup_{\beta,\gamma\geq0}\inf_{\alpha,b,\rho,\xi}
  \rho\bigg(\sum_{i}\gamma_i-(\nu-\mu)\bigg)
 +\sum_{i}\xi_i\bigg(\frac{\eta_i}{m}-\beta_i-\gamma_i\bigg)\\
 &\phantom{=}\qquad 
 +\frac{1}{2}\sum_{i,j}\alpha_i\alpha_jk(x_i,x_j)-\sum_{i}\gamma_iy_i\sum_{j}k(x_i,x_j)\alpha_j -b\sum_{i}y_i\gamma_i \\
&=
 \max
 \bigg\{
 -\frac{1}{2}
\bigg\| \sum_{i}\gamma_iy_ik(\cdot,x_i)\bigg\|_\mathcal{H}^2
 \ :\ \sum_{i:y_i=+1}\gamma_i=\sum_{i:y_i=-1}\gamma_i=\frac{\nu-\mu}{2},\ 
 0\leq\gamma_i\leq\frac{\eta_i}{m}
 \bigg\}. 
\end{align*}

Let us give a geometric interpretation of the above expression. 
For the training data $D=\{(x_i,y_i):i\in[m]\}$, the convex sets, 
$\mathcal{U}_{\eta}^{+}[\nu,\mu;D]$ 
and 
$\mathcal{U}_{\eta}^{-}[\nu,\mu;D]$, are defined as the reduced convex hulls of data points for each label, i.e., 
\begin{align*}
 &\phantom{=}\mathcal{U}_{\eta}^{\pm}[\nu,\mu;D]\\
 &=
 \bigg\{ 
 \sum_{i:y_i=\pm1}\gamma_i'k(\cdot,x_i)\in\mathcal{H}:\!\!\!
 \sum_{i:y_i=\pm1}\gamma_i'=1,\  0\leq \gamma_i'\leq \frac{2\eta_i}{(\nu-\mu)m} \
 \text{for $i$ such that $y_i=\pm1$} 
 \bigg\}. 
\end{align*}
The coefficients $\gamma_i',\,i\in[m]$ in $\mathcal{U}_{\eta}^{\pm}[\nu,\mu;D]$ are bounded above by
a non-negative real number that is usually less than one. Hence, the reduced convex hull is a subset of the convex hull
of the data points in the RKHS $\mathcal{H}$. 
Each reduced convex hull is regarded as the domain of the input samples of each label. 
Accordingly, let $\mathcal{V}_{\eta}[\nu,\mu;D]$ be the Minkowski difference of two subsets, 
\begin{align*}
 \mathcal{V}_{\eta}[\nu,\mu;D]= \mathcal{U}_{\eta}^{+}[\nu,\mu;D] \ominus \mathcal{U}_{\eta}^{-}[\nu,\mu;D], 
\end{align*}
where $A\ominus{B}$ of subsets $A$ and $B$ denotes $\{a-b:a\in{A},\,b\in{B}\}$. 
Eventually, for each ${\eta}\in{E_\mu}$, the optimal value in the above is represented by 
\begin{align*}
 \inf_{\alpha,b,\rho,\xi} \sup_{\beta,\gamma\geq0} L_\eta(\alpha,b,\rho,\xi;\beta,\gamma)
 \,=\,
 -\frac{(\nu-\mu)^2}{8}\min\left\{\|f\|_{\mathcal{H}}^2\,:\,f\in\mathcal{V}_\eta[\nu,\mu;D]\right\}. 
\end{align*}
Hence, the optimal value of \eqref{eqn:robust_nu_svm} is $-(\nu-\mu)^2/8\times\mathrm{opt}(\nu,\mu;D)$, where 
\begin{align}
\label{eqn:dual_robust_nuSVM}
\mathrm{opt}(\nu,\mu;D)
=
\max_{\eta\in{E_\mu}}\min_{f\in\mathcal{V}_\eta[\nu,\mu;D]}\|f\|_{\mathcal{H}}^2. 
\end{align}

Therefore, the dual form of robust $(\nu,\mu)$-SVM is expressed as the maximization of
the minimum distance between two reduced convex hulls, 
$\mathcal{U}_{\eta}^{+}[\nu,\mu;D]$ and $\mathcal{U}_{\eta}^{-}[\nu,\mu;D]$. 
The estimated decision function in robust $(\nu,\mu)$-SVM is provided by the optimal
solution of \eqref{eqn:dual_robust_nuSVM} up to a scaling factor depending on $\nu-\mu$. 
Moreover, the optimal value is proportional to the squared RKHS norm of the function
$f(x)\in\mathcal{H}$ in the decision function $g(x)=f(x)+b$.

\section{Breakdown Point of Robust $(\nu,\mu)$-SVM}
\label{sec:Breakdown_Point}

\subsection{Finite-Sample Breakdown Point}
\label{subsec:finite-sample_breakdown_point}
Let us describe how to evaluate the robustness of learning algorithms. There are a number of robustness measures 
for evaluating the stability of estimators. 
For example, the influence function evaluates the infinitesimal bias of the estimator caused by a few outliers included
in the training samples. 
The gross error sensitivity is the worst-case infinitesimal bias defined with the influence function~\cite{maronna06:_robus_statis}.  
In this paper, we use the \emph{finite-sample breakdown point}, and it will be referred to as the breakdown point for short. 
The breakdown point quantifies the degree of impact that the outliers have on the estimators when the contamination
ratio is not necessarily infinitesimal~\cite{donoho83}. 
In this section, we present an exact evaluation of the breakdown point of robust $(\nu,\mu)$-SVM. 

The breakdown point indicates the largest amount of contamination such that the estimator still gives information about
the non-contaminated data~\cite[Chap.3.2]{maronna06:_robus_statis}. 
More precisely, for an estimator $\theta_D$ based on a dataset $D$ of size $m$ that
takes a value in a normed space, 
the finite-sample breakdown point is defined as 
\begin{align*}
 \varepsilon^*=\max_{\kappa=0,1,\ldots,m}\{\, \kappa/m\,:\, \text{$\theta_{D'}$ is
 uniformly bounded 
 for $D'\in\mathcal{D}_{\kappa}$\,}\,\},  
\end{align*}
where 
$\mathcal{D}_\kappa$ is the family of datasets of size $m$ including at least $m-\kappa$ elements in common with the 
non-contaminated dataset $D$, i.e.,  
\begin{align*}
 \mathcal{D}_{\kappa}=
 \big\{\,D'=\{(x_i',y_i'):i\in[m]\}\subset\mathcal{X}\times\{+1,-1\}\,:\,|D'\cap{D}|\geq{m-\kappa}\,\big\}. 
\end{align*}
For simplicity, the dependency of $\mathcal{D}_{\kappa}$ on the data set $D$ is dropped. 
The condition of the breakdown point $\varepsilon^*$ can be rephrased as
\begin{align*}
\sup_{D'\in\mathcal{D}_{\kappa}}\|\theta_{D'}\|<\infty, 
\end{align*}
where $\|\cdot\|$ is the norm on the normed space. 
In most cases of interest, $\varepsilon^*$ does not 
depend on the dataset $D$. For example, the breakdown point of the one-dimensional median estimator is
$\varepsilon^*=\lfloor(m-1)/2\rfloor/m$. 

To start with, let us derive a lower bound of the breakdown point for the optimal value of
problem~\eqref{eqn:robust_nu_svm} that is expressed as $\mathrm{opt}(\nu,\mu;D)$ up to a constant factor.  
As shown in Section~\ref{subsec:Dual_Problem}, the boundedness of $\mathrm{opt}(\nu,\mu;D)$ is equivalent to the
boundedness of the RKHS norm of $f\in\mathcal{H}$ in the estimated decision function $g(x)=f(x)+b$.  
Given a labeled dataset $D=\{(x_i,y_i):i\in[m]\}$, let us define the label ratio $r$ as 
\begin{align*}
r=\frac{1}{m}\min\{\,|\{i:y_i=+1\}|,|\{i:y_i=-1\}|\,\}.  
\end{align*}

\begin{theorem}
 \label{theorem:breakdown-point-optvalue}
 Let $D$ be a labeled dataset of size $m$ with a positive label ratio $r$. 
 For the parameters $\nu,\mu$ such that $0\leq\mu<\nu<1$ and $\nu{m},\mu{m}\in\Nbb$, 
 we assume $\mu<r/2$. 
 Then, the following two conditions are equivalent. 
 \begin{description}
  \item[{\rm (i)} ] The inequality 
	     \begin{align}
	      \label{eqn:key_inequality}
	      \nu-\mu\leq2(r-2\mu)
	     \end{align}
	     holds. 
  \item[{\rm (ii)}] Uniform boundedness, 
	     \begin{align*}
	      \sup\{\mathrm{opt}(\nu,\mu;D')\,:\,D'\in\mathcal{D}_{\mu{m}}\} <\infty
	     \end{align*}
	     holds, where $\mathcal{D}_{\mu{m}}$ is the family of contaminated datasets defined from~$D$. 
 \end{description}
\end{theorem}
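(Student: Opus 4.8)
The plan is to argue entirely through the geometric dual formula \eqref{eqn:dual_robust_nuSVM}, namely $\mathrm{opt}(\nu,\mu;D')=\max_{\eta\in{}E_\mu}\min_{f\in\mathcal{V}_\eta[\nu,\mu;D']}\|f\|_\mathcal{H}^2$, together with the observation that, since $\mathcal{V}_\eta=\mathcal{U}_\eta^+\ominus\mathcal{U}_\eta^-$, the inner minimum $\min_{f\in\mathcal{V}_\eta}\|f\|_\mathcal{H}^2$ is exactly the squared distance between the two reduced convex hulls $\mathcal{U}_\eta^+$ and $\mathcal{U}_\eta^-$. Because $E_\mu$ is finite, $\sup_{D'\in\mathcal{D}_{\mu{m}}}\mathrm{opt}(\nu,\mu;D')$ is finite if and only if this hull-to-hull distance admits a bound uniform over all contaminations $D'\in\mathcal{D}_{\mu{m}}$ and all feasible $\eta\in{}E_\mu$ (those for which both hulls are nonempty). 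Since $\mathrm{opt}$ is a maximum over $\eta$, a single pair $(D',\eta)$ that drives one hull to infinity already forces $\mathrm{opt}\to\infty$; so the whole theorem reduces to deciding whether an outlier mass, sent to infinity in $\mathcal{H}$, can ever push one reduced hull away from the other.

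The key combinatorial step I would isolate is a \emph{clean-mass criterion}. Every coefficient in $\mathcal{U}_\eta^{\pm}$ is capped at $2\eta_i/((\nu-\mu)m)$ and the coefficients sum to one, so the hull of a given label contains a convex combination supported only on non-contaminated (hence fixed, norm-bounded) points if and only if the number of \emph{active} clean points of that label---clean points with $\eta_i=1$---is at least $(\nu-\mu)m/2$. If this count falls below $(\nu-\mu)m/2$, then every element of that hull must place total weight at least $\delta:=1-(\text{active clean count})\cdot 2/((\nu-\mu)m)>0$ on contaminated points, and sending those points to infinity drives the entire hull, and thus the hull-to-hull distance, to infinity. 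Conversely, if both hulls keep a fully clean combination $c^{\pm}$, then $\|c^{\pm}\|_\mathcal{H}\le B:=\max_i\sqrt{k(x_i,x_i)}$ over the fixed clean data, whence the distance is at most $\|c^+-c^-\|_\mathcal{H}\le 2B$ and $\mathrm{opt}\le 4B^2$. I emphasize that this direction uses only that the clean points have fixed finite norm, not any global boundedness of the kernel.

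For \eqref{eqn:key_inequality}$\,\Rightarrow\,$(ii) the argument is then pure counting. A contamination in $\mathcal{D}_{\mu{m}}$ removes at most $\mu{m}$ clean points, and any $\eta\in{}E_\mu$ drops at most a further $\mu{m}$; hence for either label the number of active clean points is at least $m_\ell-2\mu{m}\ge{}rm-2\mu{m}=(r-2\mu)m$, which by \eqref{eqn:key_inequality} is at least $(\nu-\mu)m/2$. The clean-mass criterion then guarantees that both hulls always retain a bounded point, so $\mathrm{opt}\le 4B^2$ uniformly, giving (ii). Here the hypothesis $\mu<r/2$ is used to ensure $(r-2\mu)m>0$, so that the minority label is genuinely the binding one.

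For the contrapositive, assume \eqref{eqn:key_inequality} is violated, i.e. $\nu-\mu>2(r-2\mu)$, and build a breakdown sequence. Take the minority label (say $+1$, with $m_+=rm$), replace $\mu{m}$ of its clean points by $\mu{m}$ outliers labeled $+1$ placed at a point $z_n$ with $\|k(\cdot,z_n)\|_\mathcal{H}\to\infty$, and leave the majority label untouched so that its hull stays feasible and bounded. Then choose $\eta$ to keep all outliers active while dropping $k$ clean positive points, so that the active clean positive count $(r-\mu)m-k$ falls strictly below $(\nu-\mu)m/2$ while the total active positive count $rm-k$ stays at least $(\nu-\mu)m/2$, keeping the positive hull feasible. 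The admissible window for $k$ with $0\le{}k\le\mu{m}$ is nonempty precisely when $(r-\mu)-(\nu-\mu)/2<\mu$, which is exactly the violated inequality $\nu-\mu>2(r-2\mu)$; this is where the hypothesis enters. Along the resulting sequence the positive hull is forced to infinity while the negative hull stays put, so $\mathrm{opt}(\nu,\mu;D'_n)\to\infty$ and (ii) fails. The main obstacle I anticipate is the bookkeeping in this final step: simultaneously meeting the forcing inequality, the drop budget $k\le\mu{m}$, the nonemptiness of \emph{both} reduced hulls, and the integrality of $\nu{m}$, $\mu{m}$ and $(\nu-\mu)m/2$, while choosing how the removed clean points are split between the labels so that the untouched hull remains feasible throughout the meaningful parameter range (where $\nu-\mu$ is not so large as to make even the clean problem infeasible). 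Producing one construction that covers all $(\nu,\mu,r)$ with \eqref{eqn:key_inequality} violated and $\mu<r/2$, rather than a patchwork of cases, is the delicate part.
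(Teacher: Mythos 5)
Your forward direction, (i) $\Rightarrow$ (ii), is essentially the paper's argument: your ``clean-mass criterion'' is exactly the paper's observation that each label retains at least $(r-2\mu)m\geq(\nu-\mu)m/2$ clean points with $\eta_i=1$, so the restricted hulls $\widetilde{\mathcal{U}}_\eta^{\pm}$ supported on clean points are nonempty and contained in the convex hull of the original data, giving a bound depending only on $D$. That half is sound.

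The converse is where there is a genuine gap. Your breakdown construction places $\mu m$ outliers at a point $z_n$ with $\|k(\cdot,z_n)\|_{\mathcal{H}}\to\infty$ and argues that any hull element forced to carry weight $\delta>0$ on these outliers has diverging norm. But $\|k(\cdot,z_n)\|_{\mathcal{H}}=\sqrt{k(z_n,z_n)}$, so this requires the kernel to be unbounded; for a bounded kernel (e.g.\ Gaussian) every reduced hull lies in a fixed ball of $\mathcal{H}$ and your mechanism can never produce $\mathrm{opt}=\infty$. The theorem carries no boundedness assumption on the kernel, and indeed the paper's Theorem~\ref{theorem:bounded_kernel_bounded_bias} and Figure~\ref{fig:breakdown-ponit} make clear that the converse must hold for bounded kernels as well. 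The correct, kernel-independent mechanism --- which your construction deliberately avoids by insisting that the positive hull ``stays feasible'' --- is to make one reduced hull \emph{empty}: the paper flips the labels of $\mu m$ minority-class points (so those points leave that class entirely) and then sets $\eta_i=0$ on $\mu m$ further clean points of that class, leaving only $(r-2\mu)m<(\nu-\mu)m/2$ active points whose capped coefficients cannot sum to one. Then $\min$ over the empty set is $+\infty$ and $\mathrm{opt}(\nu,\mu;D')=\infty$ regardless of the kernel. Your construction, which replaces positive points by positive-labeled outliers, keeps the class size at $rm$ and so cannot be converted into this emptiness argument; the label-flipping is the missing idea. (The bookkeeping worries you raise at the end are real but secondary to this.)
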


The proof is given in Appendix~\ref{appendix:proof_breakdown_obj}. The inequality $\mu<r/2$ is a requisite condition. 
If this inequality is violated, the majority of, say, positive labeled samples in the non-contaminated training dataset
can be replaced with outliers. In such a situation, the statistical features in the original dataset will not be retained. 
Indeed, if $\mu\geq{}r/2$ holds, $\mathrm{opt}(\nu,\mu;D')$ is unbounded over $D'\in\mathcal{D}_{\mu{m}}$ regardless of
$\nu$. Since it is proved by a rigorous description of the above intuitive interpretation, the proof is omitted. 
Theorem~\ref{theorem:breakdown-point-optvalue} indicates that the breakdown point of the RKHS element in the estimated
decision function is greater than or equal to $\mu$, if $\mu$ and $\nu$ satisfy inequality~\eqref{eqn:key_inequality}. 
Conversely, if the inequality $\nu-\mu\leq2(r-2\mu)$ is violated, 
the breakdown point of robust $(\nu,\mu)$-SVM does not reach $\mu$, even though $\mu{m}$ samples are
removed from the training data. 
In addition, the inequality \eqref{eqn:key_inequality} indicates the trade-off between the ratio of outliers $\mu$ and
the ratio of support vectors $\nu-\mu$. This result is reasonable. The number of support vectors corresponds 
to the dimension of the statistical model. When the ratio of outliers is large, a simple statistical model should be
used to obtain robust estimators. 
If there is no outlier in training data, i.e., $\mu=0$, 
inequality \eqref{eqn:key_inequality} reduces to $\nu\leq{2r}$. 
For the standard $\nu$-SVM, this is a necessary and sufficient condition for the optimization problem 
\eqref{eqn:nu-svm} to be bounded~\cite{CriBur00}. 

When the contamination ratio in a training dataset is greater than $\mu$, 
the estimated decision function is not necessarily bounded. 
\begin{theorem}
 \label{theorem:breakdown-point_upper_bound}
 Suppose that $\nu$ and $\mu$ are rational numbers such that $0<\mu<1/4$ and $\mu<\nu<1$. 
 Then, there exists a dataset $D$ of size $m$ with the label ratio $r$ such that 
 $\mu<r/2$ and 
 \begin{align*}
  \sup\{\mathrm{opt}(\nu,\mu;D')\,:\,D'\in\mathcal{D}_{\mu{m}+1}\}=\infty
 \end{align*}
 hold, where $\mathcal{D}_{\mu{m}+1}$ is defined from $D$. 
 \end{theorem}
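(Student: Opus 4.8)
The plan is to read off breakdown from the geometric dual \eqref{eqn:dual_robust_nuSVM}, which expresses $\mathrm{opt}(\nu,\mu;D')$ as the largest squared distance $\max_{\eta\in E_\mu}\mathrm{dist}(\mathcal{U}^{+}_\eta,\mathcal{U}^{-}_\eta)^2$ between the two reduced convex hulls. Thus $\sup_{D'}\mathrm{opt}=\infty$ as soon as we can contaminate $D$ with $\mu m+1$ points and exhibit a \emph{single} $\eta\in E_\mu$ that drives one reduced hull to infinity in $\mathcal{H}$. I work with a kernel whose feature map is unbounded (the linear kernel on $\Rbb^d$ suffices), so an input placed at $tv$ with $\|v\|=1$ has $\|k(\cdot,tv)\|_{\mathcal{H}}=t\to\infty$; this is what permits breakdown at all.

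The engine is an escape lemma for a reduced hull. Fix $\eta$ and look at the positive side: let $b$ count the indices with $y'_i=+1$, $\eta_i=1$ whose point is \emph{bounded} (not a planted outlier). Since each coefficient obeys $\gamma'_i\le c:=2/((\nu-\mu)m)$ while $\sum\gamma'_i=1$, the bounded points carry mass at most $bc$. Hence if $b<1/c=(\nu-\mu)m/2$, then every $a\in\mathcal{U}^{+}_\eta$ puts mass at least $1-bc>0$ on the far outliers, so $\min_{a\in\mathcal{U}^{+}_\eta}\|a\|_{\mathcal{H}}\ge(1-bc)\,t-O(1)\to\infty$; whereas if $b\ge 1/c$ the hull still contains a bounded point. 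So the positive hull escapes precisely when the number of surviving bounded positives drops strictly below $(\nu-\mu)m/2$.

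I then build $D$ at the boundary of \eqref{eqn:key_inequality}. Assuming first $\nu\le 1-3\mu$, I set $r=(\nu+3\mu)/2\le 1/2$ and choose $m$ a large common multiple of the denominators of $\nu,\mu$ so that $\nu m$, $\mu m$, $rm$ and $(\nu-\mu)m/2$ are all integers; the minority (positive) class is $p=rm$ bounded points and the majority (negative) class is $n=m-p$ bounded points, all near the origin. The contamination $D'\in\mathcal{D}_{\mu m+1}$ replaces $j=\mu m+1$ of the positive points by positive outliers at $tv$, and I take $\eta$ with $\eta_i=0$ on $s=\mu m$ of the remaining bounded positives and $\eta_i=1$ otherwise, so $\eta\in E_\mu$. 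Counting gives surviving bounded positives $b=p-j-s=(\nu-\mu)m/2-1<(\nu-\mu)m/2$, while the negative hull stays bounded because $n\ge(\nu-\mu)m/2$ (this follows from $\nu\le 1-\mu$). The escape lemma then forces $\mathrm{dist}(\mathcal{U}^{+}_\eta,\mathcal{U}^{-}_\eta)\to\infty$, hence $\mathrm{opt}(\nu,\mu;D')\to\infty$ as $t\to\infty$, giving $\sup=\infty$.

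Finally I explain why the single extra sample is decisive and clear the remaining range of $\nu$. The budgets enter only through $j+s$: the escape condition $b<(\nu-\mu)m/2$ is exactly $j+s>p-(\nu-\mu)m/2=2\mu m$. At contamination level $\mu m+1$ one reaches $j+s=2\mu m+1>2\mu m$, whereas at level $\mu m$ both $j\le\mu m$ and $s\le\mu m$ force $j+s\le 2\mu m$, hence $b\ge(\nu-\mu)m/2$ and a bounded hull—matching the boundedness that Theorem~\ref{theorem:breakdown-point-optvalue} guarantees over $\mathcal{D}_{\mu m}$ at this $r$. For the leftover case $\nu>1-3\mu$, inequality \eqref{eqn:key_inequality} fails for every admissible $r\le 1/2$, so Theorem~\ref{theorem:breakdown-point-optvalue} already yields $\sup_{\mathcal{D}_{\mu m}}\mathrm{opt}=\infty$, hence the same over the larger family $\mathcal{D}_{\mu m+1}$; one takes $r=1/2$, valid since $\mu<1/4$ gives $\mu<r/2$. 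The step I expect to cost the most is the escape lemma together with the integer bookkeeping that keeps every cardinality a nonnegative integer and the negative hull feasible; this is what forces the divisibility choices on $m$ and the hypothesis $\mu<1/4$ (ensuring an admissible $r$ with $2\mu<r\le1/2$).
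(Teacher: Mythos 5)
Your route is genuinely different from the paper's, and it has a real gap: the escape mechanism only works when the feature map is unbounded. You explicitly rely on $\|k(\cdot,tv)\|_{\mathcal{H}}=t\to\infty$ (linear kernel on an unbounded $\mathcal{X}$), but the theorem places no such restriction on the kernel, and the paper uses it precisely in the bounded-kernel case (the summary after Theorem~\ref{theorem:bounded_kernel_bounded_bias} and the left panel of Figure~\ref{fig:breakdown-ponit} assert that for bounded kernels the breakdown point \emph{equals} $\mu$, and the upper bound ``$\leq\mu$'' is exactly what Theorem~\ref{theorem:breakdown-point_upper_bound} must supply). For a bounded kernel such as the Gaussian, every $k(\cdot,x)$ lies in a ball of $\mathcal{H}$, so whenever all reduced hulls $\mathcal{U}_{\eta}^{\pm}[\nu,\mu;D']$ are nonempty one has $\mathrm{opt}(\nu,\mu;D')\leq 4\sup_x k(x,x)<\infty$ uniformly in $D'$. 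Your construction deliberately keeps both hulls nonempty (you verify $(b+j)c\geq 1$ and $nc\geq1$), so for a bounded kernel it produces a uniformly bounded $\mathrm{opt}$ and proves nothing. The only way to force $\mathrm{opt}=\infty$ for a general kernel is to make a reduced hull \emph{empty}, i.e., to make the dual infeasible.

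That is exactly what the paper does, and it is both simpler and kernel-free: take $|I_-|=2\mu m+1$, $|I_+|=m-(2\mu m+1)$ (so $r=2\mu+1/m>2\mu$), flip the labels of $\mu m+1$ negative samples to obtain a single $D'\in\mathcal{D}_{\mu m+1}$ with only $\mu m$ negatives left, and set $\eta_i'=0$ on all of them. Then every sample with $\eta_i'=1$ is positive, the constraint $\sum_{i:y_i'=-1}\gamma_i'=1$ with $\gamma_i'\leq 2\eta_i'/((\nu-\mu)m)=0$ is unsatisfiable, $\mathcal{U}_{\eta'}^{-}[\nu,\mu;D']=\emptyset$, and $\mathrm{opt}(\nu,\mu;D')=\infty$ already for this one dataset --- no limit $t\to\infty$, no case split on $\nu$, and no hypothesis on $k$ or $\mathcal{X}$. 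Your combinatorial bookkeeping (the threshold $j+s>2\mu m$ and why one extra contaminated point is decisive) is sound and gives good intuition for why the budget $\mu m$ is exactly critical, and your fallback via Theorem~\ref{theorem:breakdown-point-optvalue} for $\nu>1-3\mu$ is fine (that lemma does construct an empty hull); but as written the main case establishes the theorem only for unbounded feature maps. To repair it, replace the ``push to infinity'' step by an emptiness argument: use your budget $j+s=2\mu m+1$ not to exile points but to annihilate one class's capacity, which is the paper's construction.
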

The proof is given in Appendix~\ref{appendix:upperbound_breakdown_opt}. 
Theorems~\ref{theorem:breakdown-point-optvalue} and \ref{theorem:breakdown-point_upper_bound}
lead to the fact that the breakdown point of the function part $f\in\mathcal{H}$ in the estimated decision function
$g=f+b$ is exactly equal to $\varepsilon^*=\mu$, when the learning parameters of the robust $(\nu,\mu)$-SVM satisfy
$\mu<r/2$ and $\nu-\mu\leq2(r-2\mu)$. Otherwise, the breakdown point of $f$ is strictly less than $\mu$. 

We show the robustness of the bias term $b$. 
Let $b_D$ be the estimated bias parameter obtained by robust $(\nu,\mu)$-SVM from the training dataset~$D$. 
We will derive a lower bound of the breakdown point of the bias term. 
Then, we will show that the breakdown point of robust $(\nu,\mu)$-SVM with a bounded kernel is given 
by a simple formula. 
\begin{theorem}
 \label{theorem:bounded_bias}
 Let $D$ be an arbitrary  dataset of size $m$ with a positive label ratio $r$. 
 Suppose that $\nu$ and $\mu$ satisfy 
 $0<\mu<\nu<1$, $\nu{m},\mu{m}\in\Nbb$, and $\mu<r/2$. 
 For a non-negative integer $\ell$, we assume 
 \begin{align}
  \label{eqn:breakdown_point_bias}
  0\leq2\left(\mu-\frac{\ell}{m}\right)<\nu-\mu<2(r-2\mu). 
 \end{align}
 Then, uniform boundedness  
 \begin{align*}
  \sup\{\,|b_{D'}|:D'\in\mathcal{D}_{\mu{m}-\ell}\,\}<\infty
 \end{align*}
 holds, where $\mathcal{D}_{\mu{m}-\ell}$ is defined from $D$. 
\end{theorem}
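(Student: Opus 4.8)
\emph{Proof plan.} The plan is first to reduce the claim to a bound on the bias alone, and then to pin down $b$ by a counting argument over KKT support conditions. Since the hypothesis \eqref{eqn:breakdown_point_bias} gives $\nu-\mu<2(r-2\mu)$ together with $\mu<r/2$, Theorem~\ref{theorem:breakdown-point-optvalue} applies and yields $\sup\{\mathrm{opt}(\nu,\mu;D'):D'\in\mathcal{D}_{\mu m}\}<\infty$. As noted just before that theorem, this is equivalent to uniform boundedness of $\|f_{D'}\|_{\mathcal{H}}$, and since $\mathcal{D}_{\mu m-\ell}\subseteq\mathcal{D}_{\mu m}$ the norm $\|f_{D'}\|_{\mathcal{H}}$ is bounded over $D'\in\mathcal{D}_{\mu m-\ell}$. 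The key observation is that $b$ only needs to be controlled through the \emph{original} sample points: for any $x_i\in D$ one has $|f_{D'}(x_i)|\le\|f_{D'}\|_{\mathcal{H}}\sqrt{k(x_i,x_i)}$, so there is a constant $M$, depending only on $D$ and on the uniform bound on $\|f\|_{\mathcal{H}}$, with $|f_{D'}(x_i)|\le M$ for every non-contaminated point and every $D'\in\mathcal{D}_{\mu m-\ell}$. This is exactly why no global boundedness of the kernel is needed here.

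Fix $D'$ and let $\eta\in E_\mu$ be the outlier indicator attained at the solution; with $\eta$ frozen the remaining problem in $(f,b,\rho)$ is convex, and I would use its dual multipliers $\gamma_i$ from Section~\ref{subsec:Dual_Problem}, which satisfy $\sum_{y_i'=+1}\gamma_i=\sum_{y_i'=-1}\gamma_i=(\nu-\mu)/2$ and $0\le\gamma_i\le\eta_i/m$. Complementary slackness gives the two one-sided conditions
\begin{align*}
\gamma_i>0 \ \Longrightarrow\ y_i'g_{D'}(x_i')\le\rho, \qquad \gamma_i<\eta_i/m \ \Longrightarrow\ y_i'g_{D'}(x_i')\ge\rho.
\end{align*}
Writing $g_{D'}=f_{D'}+b_{D'}$ and specialising to a non-contaminated point gives four linear inequalities tying $b_{D'}$, $\rho$, and the bounded quantity $f_{D'}(x_i)$: a positive point with $\gamma_i>0$ yields $b_{D'}\le\rho+M$; a negative point with $\gamma_i>0$ yields $b_{D'}\ge-\rho-M$; a positive point with $\gamma_i<\eta_i/m$ yields $b_{D'}\ge\rho-M$; and a negative point with $\gamma_i<\eta_i/m$ yields $b_{D'}\le-\rho+M$. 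Adding the first and last gives $b_{D'}\le M$ and adding the middle two gives $b_{D'}\ge-M$, so $\rho$ cancels and $|b_{D'}|\le M$, \emph{provided} all four types of non-contaminated points exist.

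It then remains to guarantee the existence of such points on both labels, and this is where the two inequalities in \eqref{eqn:breakdown_point_bias} enter. Since $\gamma_i\le1/m$, the constraint $\sum_{y_i'=+1}\gamma_i=(\nu-\mu)/2$ forces at least $(\nu-\mu)m/2$ positive points with $\gamma_i>0$, and at most $(\nu-\mu)m/2$ positive points with $\gamma_i=1/m$; likewise on the negative side. At most $\mu m-\ell$ points of $D'$ are contaminated, so the number of non-contaminated positive points with $\gamma_i>0$ is at least $(\nu-\mu)m/2-(\mu m-\ell)$, which is positive precisely because $2(\mu-\ell/m)<\nu-\mu$; this secures the two ``$\gamma_i>0$'' types. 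For the ``$\gamma_i<\eta_i/m$'' types, the number of non-contaminated positive points is at least $rm-(\mu m-\ell)$, while those with $\gamma_i=1/m$ number at most $(\nu-\mu)m/2$, and the difference is positive because $\nu-\mu<2(r-2\mu)\le2(r-\mu+\ell/m)$. The negative side is symmetric, so all four types are realised by non-contaminated points and $|b_{D'}|\le M$ holds uniformly over $\mathcal{D}_{\mu m-\ell}$.

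The main obstacle I anticipate is the bookkeeping in the counting step: one must argue simultaneously over both labels while the $\mu m-\ell$ contaminated points are distributed adversarially, and must keep the integer counts consistent with the non-integer thresholds $(\nu-\mu)m/2$ and $rm$. The left inequality $2(\mu-\ell/m)<\nu-\mu$ is exactly the slack needed to retain a non-contaminated support point ($\gamma_i>0$) on each side, whereas the right inequality $\nu-\mu<2(r-2\mu)$, already used to bound $\|f\|_{\mathcal H}$, leaves enough non-saturated points ($\gamma_i<\eta_i/m$) on each side; confirming that these are precisely the conditions consumed by the argument is the delicate part.
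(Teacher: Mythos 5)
Your proposal is correct in substance but takes a genuinely different route from the paper's proof. The paper first bounds $\|f_{D'}\|_{\mathcal{H}}$ exactly as you do (via Theorem~\ref{theorem:breakdown-point-optvalue} and the reproducing property at the \emph{original} inputs, so no kernel boundedness is needed), but then it fixes $f_{D'}$ and studies the objective as a function of $b$ alone: by \eqref{eqn:diff_CVaR} this is a continuous piecewise-linear trimmed mean $\zeta(b)$ of the negative margins, and a label-counting argument inside the trimmed index set $T_b$ shows that negative labels outnumber positive ones there whenever $b>R$ (and symmetrically for $b<-R$), so $\zeta$ is increasing on $(R,\infty)$ and decreasing on $(-\infty,-R)$ and every minimizer lies in $[-R,R]$. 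You instead freeze the optimal $\eta$, invoke strong duality and complementary slackness for the resulting convex subproblem, and cancel $\rho$ between four types of non-contaminated support conditions, with a counting argument on the dual multipliers $\gamma_i$ guaranteeing that all four types survive contamination. The two arguments consume the hypotheses of \eqref{eqn:breakdown_point_bias} in closely analogous ways --- the left inequality secures non-contaminated ``active'' points ($\gamma_i>0$ for you, the case analysis (i) for the paper), the right inequality secures non-contaminated ``inactive'' ones --- so your version is a legitimate dual-side rephrasing that avoids the monotonicity analysis of $\zeta$, at the cost of invoking KKT for the frozen-$\eta$ problem (which is justified: the subproblem is a convex program with affine constraints and finite value, so primal attainment and strong duality hold).

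One repair is needed in the count for the ``$\gamma_i<\eta_i/m$'' types: you treat these as the complement of the saturated points $\gamma_i=1/m$, but an index with $\eta_i=0$ has $\gamma_i=0=\eta_i/m$ and belongs to neither class (complementary slackness gives no information there, since the sample is discarded). You must therefore also subtract the at most $\mu m$ indices with $\eta_i=0$. The corrected lower bound
\begin{align*}
rm-(\mu m-\ell)-\mu m-\tfrac{(\nu-\mu)m}{2}>0
\end{align*}
is equivalent to $\nu-\mu<2(r-2\mu)+2\ell/m$, which still follows from the right inequality of \eqref{eqn:breakdown_point_bias}, so the argument goes through after this adjustment.
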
 
The proof is given in Appendix~\ref{appendix:proof_breakdown_bias}. Note that the
inequality~\eqref{eqn:breakdown_point_bias} is a sufficient condition of inequality~\eqref{eqn:key_inequality}. 
Theorem~\ref{theorem:bounded_bias} guarantees that the breakdown point of the estimated decision function $f+b$ is not
less than $\mu-\ell/m$ when \eqref{eqn:breakdown_point_bias} holds. 

When the kernel function is bounded, the boundedness of the function part $f\in\mathcal{H}$ in the decision
function $f+b$ almost guarantees the boundedness of the bias term $b$. 
\begin{theorem}
 \label{theorem:bounded_kernel_bounded_bias}
 Let $D$ be an arbitrary dataset of size $m$ with a positive label ratio $r$. 
 For the parameters $\nu,\mu$ such that 
 $0<\mu<\nu<1$ and $\nu{m},\mu{m}\in\Nbb$, suppose that $\mu<r/2$ and $\nu-\mu<2(r-2\mu)$ hold. 
 In addition, assume that the kernel function $k(x,x')$ of the RKHS $\mathcal{H}$ is bounded, i.e.,
 $\sup_{x\in\mathcal{X}}k(x,x)<\infty$. Then, uniform boundedness, 
 \begin{align*}
  \sup\{\,|b_{D'}|:D'\in\mathcal{D}_{\mu{m}}\,\}<\infty, 
 \end{align*}
 holds, where $\mathcal{D}_{\mu{m}}$ is defined from $D$. 
\end{theorem}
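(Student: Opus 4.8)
The plan is to combine the uniform bound on the RKHS part $f$ supplied by Theorem~\ref{theorem:breakdown-point-optvalue} with a direct analysis of the low-dimensional subproblem that pins down the pair $(\rho,b)$. First I would observe that the hypotheses $\mu<r/2$ and $\nu-\mu<2(r-2\mu)$ imply inequality~\eqref{eqn:key_inequality}, so Theorem~\ref{theorem:breakdown-point-optvalue} applies and gives $\sup_{D'\in\mathcal{D}_{\mu m}}\mathrm{opt}(\nu,\mu;D')<\infty$; since $\mathrm{opt}$ is proportional to $\|f_{D'}\|_\mathcal{H}^2$, the RKHS part is uniformly bounded, $\sup_{D'}\|f_{D'}\|_\mathcal{H}=:R<\infty$. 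The bounded-kernel assumption then upgrades this to a uniform \emph{pointwise} bound: by the reproducing property $|f_{D'}(x)|\le\|f_{D'}\|_\mathcal{H}\sqrt{k(x,x)}\le R\sqrt{\sup_{x}k(x,x)}=:M$ for every $x\in\mathcal{X}$ and every $D'$. Crucially this controls $f_{D'}$ even at \emph{contaminated} input points, which is precisely the ingredient the unbounded-kernel case lacks.

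Next I would isolate how $b$ is determined. At an optimal solution $(f,b,\rho,\eta)$ of \eqref{eqn:robust_nu_svm}, the pair $(\rho,b)$ is a partial minimizer of the objective with $f$ and $\eta$ held fixed; writing $s_i=y_if(x_i)$ the hinge term reads $[\rho-s_i-y_ib]_+$. I would change variables to $a=\rho-b$ and $c=\rho+b$, under which the objective separates as $\phi_+(a)+\phi_-(c)$ with $\phi_+(a)=-\tfrac{\nu-\mu}{2}a+\tfrac1m\sum_{i:y_i=+1}\eta_i[a-s_i]_+$ and the analogous $\phi_-$ over the negative class. Each $\phi_\pm$ is a convex piecewise-linear function of a single variable whose kinks sit at the values $s_i$, all of which lie in $[-M,M]$ by the previous step. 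Since $b=(c-a)/2$, bounding the minimizers $a^\star,c^\star$ of $\phi_+,\phi_-$ immediately bounds $|b|$.

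The remaining point is to show that $a^\star$ and $c^\star$ stay inside the data range $[-M,M]$ rather than escaping to $\pm\infty$. The slope of $\phi_+$ equals $-\tfrac{\nu-\mu}{2}$ for $a$ below all active positive $s_i$ (so $\phi_+$ decreases there) and $-\tfrac{\nu-\mu}{2}+\tfrac1m\sum_{i:y_i=+1}\eta_i$ for $a$ above all of them; the minimizer remains in $[\min_i s_i,\max_i s_i]\subseteq[-M,M]$ exactly when the latter slope is positive, i.e.\ when $\sum_{i:y_i=+1}\eta_i>(\nu-\mu)m/2$. Here I would run the counting argument: for $D'\in\mathcal{D}_{\mu m}$ the positive class retains at least $(r-\mu)m$ points, and at most $\mu m$ of them can be switched off by $\eta$, so $\sum_{i:y_i=+1}\eta_i\ge(r-2\mu)m$, and symmetrically for the negative class. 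The strict hypothesis $\nu-\mu<2(r-2\mu)$ yields $(r-2\mu)m>(\nu-\mu)m/2$, so both slopes are strictly positive and $|a^\star|,|c^\star|\le M$. Hence $|b_{D'}|\le M$ uniformly over $\mathcal{D}_{\mu m}$.

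I expect the main obstacle to be the counting step, together with the need to verify it at the \emph{optimal} $\eta$, where the discarded points may be concentrated in one class; the worst case (all $\mu m$ removed points belonging to the class under consideration) is exactly what produces the factor $2\mu$ and makes $\nu-\mu<2(r-2\mu)$ the sharp threshold. A secondary technicality is justifying that the separated univariate minimizers can be read off from the global optimum, which follows from the fact that a joint minimizer is a partial minimizer in the $(\rho,b)$ coordinates.
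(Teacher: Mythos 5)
Your proposal is correct, and it reaches the conclusion by a genuinely different route than the paper. The paper first eliminates $\rho$ and $\eta$ via the CVaR-difference identity \eqref{eqn:diff_CVaR}, reducing the bias estimation to minimizing the one-variable piecewise-linear function $\zeta(b)$ (the trimmed mean of negative margins), and then shows $\zeta$ is increasing for $b>R$ and decreasing for $b<-R$ through a two-case analysis of which label dominates the middle window $T_b$ of sorted margins. You instead freeze the optimal $\eta$ and observe that the $(\rho,b)$-subproblem decouples under $a=\rho-b$, $c=\rho+b$ into two independent univariate convex piecewise-linear problems $\phi_+$, $\phi_-$ whose kinks all lie in $[-M,M]$ thanks to the bounded kernel; positivity of the terminal slopes, which follows from the same counting ($\sum_{i:y_i'=\pm1}\eta_i\geq(r-2\mu)m>(\nu-\mu)m/2$, valid for every $\eta\in E_\mu$ and hence for the optimal one), confines both minimizers and yields the explicit bound $|b_{D'}|\leq M$. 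Both arguments rest on the same two ingredients --- the uniform RKHS bound from Theorem~\ref{theorem:breakdown-point-optvalue} upgraded to a pointwise bound at contaminated inputs, and the counting consequence of $\nu-\mu<2(r-2\mu)$ --- but your separation trick replaces the paper's case distinction (whether some margin in $T_b$ falls below $R-b$) with a single terminal-slope check, which is cleaner and makes transparent why $2(r-2\mu)$ is the sharp threshold. The only point to state carefully is the one you already flag: a global minimizer of \eqref{eqn:robust_nu_svm} is in particular a partial minimizer in $(\rho,b)$ for its own optimal $(f,\eta)$, so the univariate analysis legitimately applies to $b_{D'}$.
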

The proof is given in Appendix~\ref{appendix:proof_bounded_kernel_breakdown_bias}. 
Compared with Theorem~\ref{theorem:bounded_bias} in which arbitrary kernel functions are treated,
Theorem~\ref{theorem:bounded_kernel_bounded_bias} ensures that a tighter lower bound of the breakdown point is
obtained for bounded kernels.  The above result agrees with those of other studies. 
The authors of \cite{yu12:_polyn_form_robus_regres} proved that bounded kernels produce
robust estimators for regression problems in the sense of bounded response, i.e.,
robustness against a single outlier. 

\begin{figure}[t]
  \begin{tabular}{cc}
   Bounded kernel  &  Unbounded kernel  \\
   \includegraphics[scale=0.35]{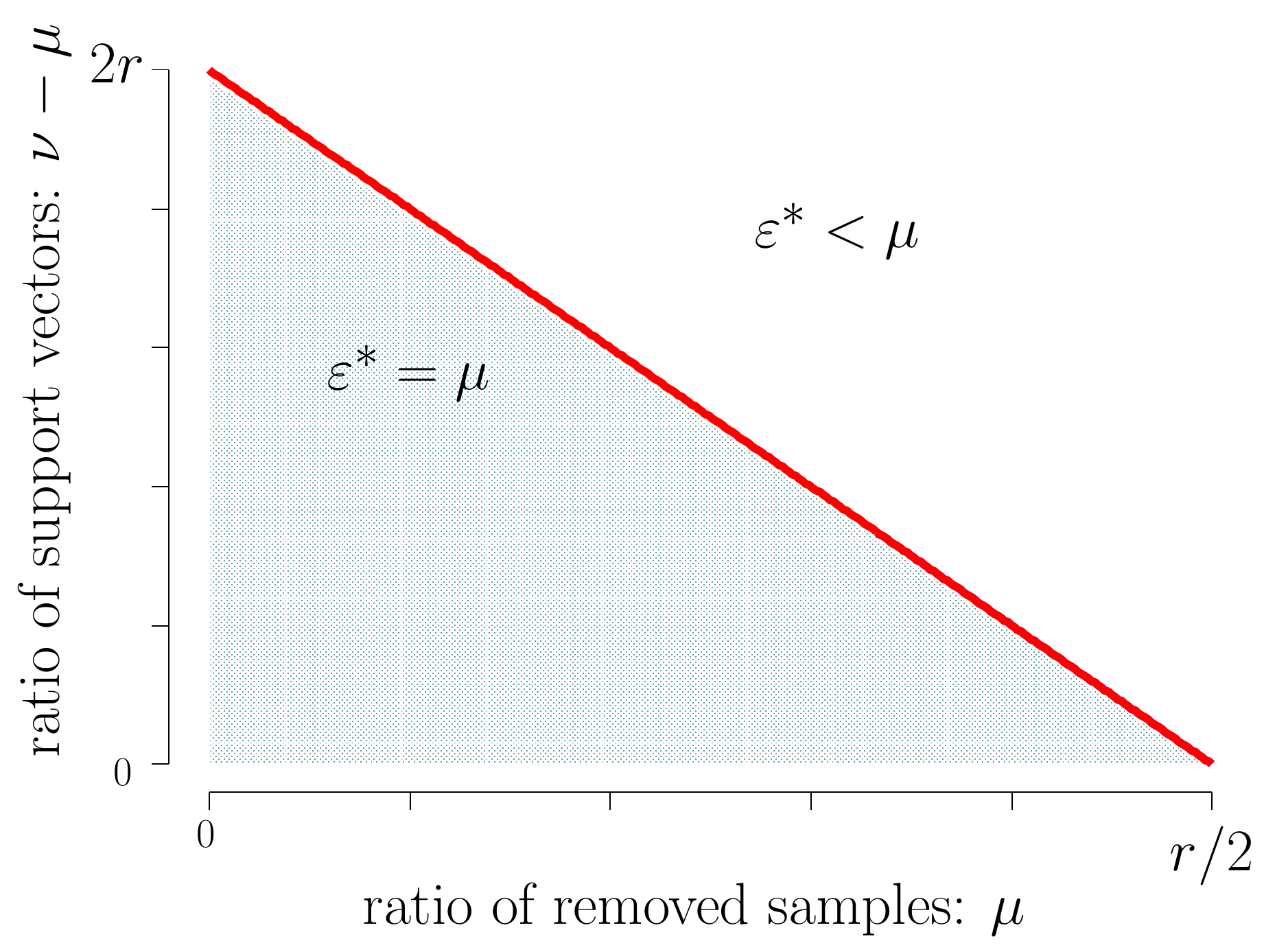} &
   \includegraphics[scale=0.35]{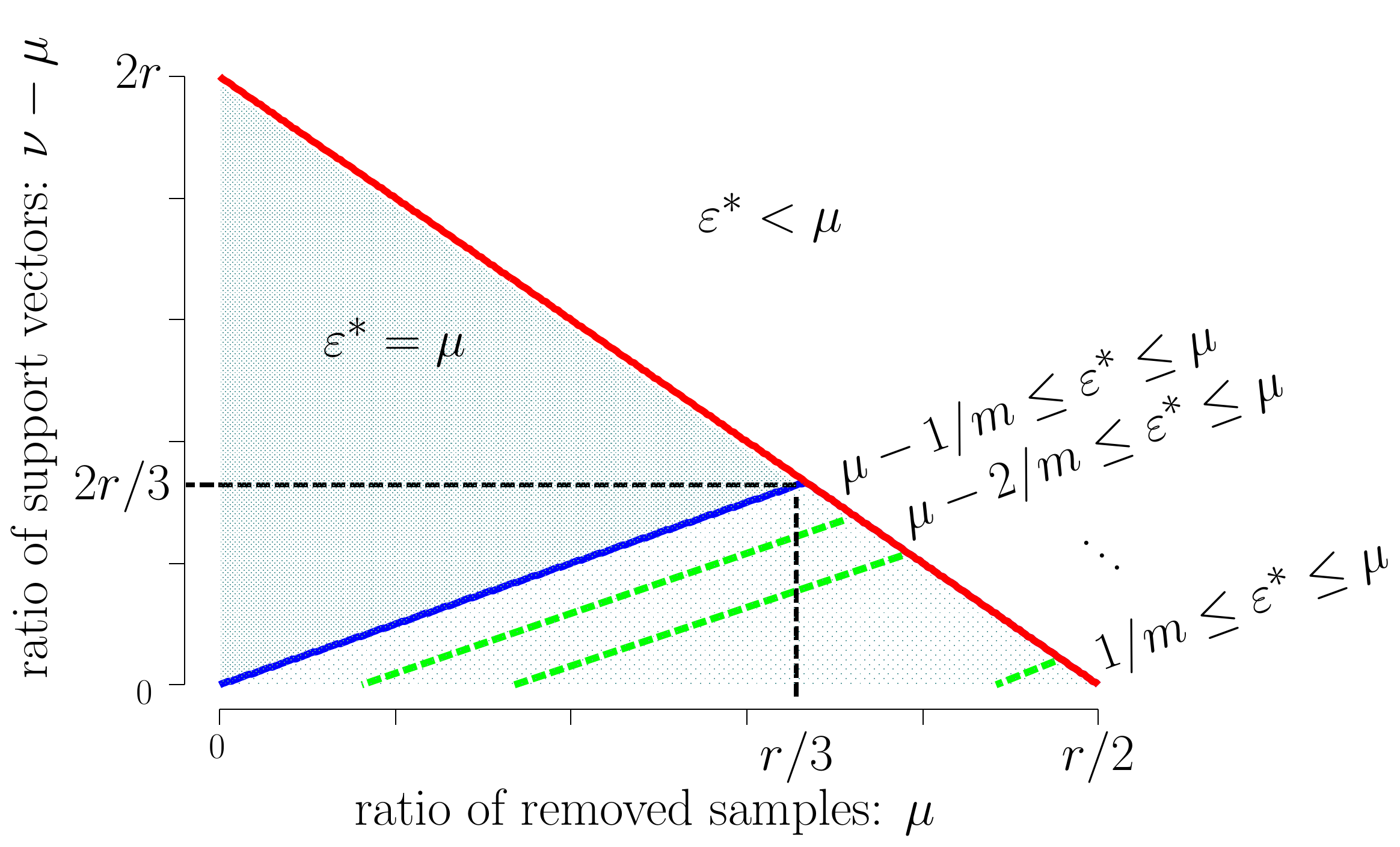} 
  \end{tabular}
 \caption{Left (resp. Right) panel: breakdown point of $(f,b)\in\mathcal{H}\times\Rbb$ given by robust $(\nu,\mu)$-SVM
 with bounded (resp. unbounded) kernel.}
 \label{fig:breakdown-ponit}
\end{figure}

Combining Theorems~\ref{theorem:breakdown-point-optvalue}, 
\ref{theorem:breakdown-point_upper_bound}, \ref{theorem:bounded_bias} and
\ref{theorem:bounded_kernel_bounded_bias}, we find that the breakdown point 
of $(\nu,\mu)$-SVM with $\mu<r/2$ is given as follows. 
\begin{description}
 \item[Bounded kernel:]
	    For $\nu-\mu>2(r-2\mu)$, the breakdown point of $f\in\mathcal{H}$ is less than $\mu$. 
	    For $\nu-\mu\leq2(r-2\mu)$, the breakdown point of $(f,b)\in\mathcal{H}\times\Rbb$ is equal to $\mu$. 
 \item[Unbounded kernel:]
	    For $\nu-\mu>2(r-2\mu)$, the breakdown point of $f\in\mathcal{H}$ is less than $\mu$. 
	    For $2\mu<\nu-\mu\leq2(r-2\mu)$, the breakdown point of $(f,b)\in\mathcal{H}\times\Rbb$ is equal to $\mu$. 
	    When $0<\nu-\mu<\min\{2\mu,2(r-2\mu)\}$, the breakdown point of the function part $f$ is equal to $\mu$, and 
	    the breakdown point of the bias term $b$ is bounded from below by $\mu-\ell/m$ and from above by $\mu$,
	    where $\ell\in\Nbb$ depends on $\nu$ and $\mu$, as shown in Theorem~\ref{theorem:bounded_bias}. 
\end{description}
Figure~\ref{fig:breakdown-ponit} shows the breakdown point of robust $(\nu,\mu)$-SVM. 
The line $\nu-\mu=2(r-2\mu)$ is critical. For unbounded kernels, we obtain only a bound of the breakdown
point. Hence, there is a possibility that unbounded kernels provide the same breakdown point as bounded
kernels.

\subsection{Acceptable Region for Learning Parameters}
\label{subsec:Acceptable_Region_Learning_Parameters}

The theoretical analysis in Section~\ref{subsec:finite-sample_breakdown_point} 
suggests that robust $(\nu,\mu)$-SVM satisfying
$0<\nu-\mu<2(r-2\mu)$ is a good choice for obtaining a robust classifier, especially when a bounded kernel is
used. Here, $r$ is the label ratio of the non-contaminated original data $D$, and usually it is unknown in real-world
data analysis. Thus, we need to estimate $r$ from the contaminated dataset $D'$. 

If an upper bound of the outlier ratio is known to be 
$\bar{\mu}$, we have $D'\in\mathcal{D}_{\bar{\mu}m}$, where $\mathcal{D}_{\bar{\mu}m}$ is defined
from $D$. Let $r'$ be the label ratio of $D'$. Then, the label ratio of the original dataset $D$
should satisfy $r_{\mathrm{low}}\leq{r}\leq{}r_{\mathrm{up}}$, where
$r_{\mathrm{low}}=\max\{r'-\bar{\mu},0\}$ and $r_{\mathrm{up}}=\min\{r'+\bar{\mu},1/2\}$. 
Let $\Lambda_{\mathrm{low}}$ and $\Lambda_{\mathrm{up}}$ be
\begin{align*}
 \begin{array}{l}
  \displaystyle
  \Lambda_{\mathrm{low}}=\{(\nu,\mu)\,:\,0\leq\mu\leq\bar{\mu},\,0<\nu-\mu<2(r_{\mathrm{low}}-2\mu)\},\\
  \displaystyle
   \ \Lambda_{\mathrm{up}}=\{(\nu,\mu)\,:\,0\leq\mu\leq\bar{\mu},\,0<\nu-\mu<2(r_{\mathrm{up}}-2\mu)\}. 
 \end{array}
\end{align*}
Then, robust $(\nu,\mu)$-SVM with $(\nu,\mu)\in\Lambda_{\mathrm{low}}$ reaches the breakdown point $\mu$ for any
non-contaminated dataset $D$ such that $D'\in\mathcal{D}_{\mu{m}}$ for given $D'$. 
On the other hand, the parameters $(\nu,\mu)$ on the outside of $\Lambda_{\mathrm{up}}$ is not necessary. Indeed, the
parameter $\mu$ such that $0<\mu\leq\bar{\mu}$ is sufficient to detect outliers. In addition, for any non-contaminated
data $D$ such that $D'\in\mathcal{D}_{\bar{\mu}m}$ for given $D'$, $(\nu,\mu)$ satisfying
$\nu-\mu>2(r_{\mathrm{up}}-2\mu)$ does not yield a learning method that reaches the breakdown point $\mu$. 

When an upper bound $\bar{\mu}$ is unknown, we use $\bar{\mu}=r/2$ and obtain
$\bar{r}_{\mathrm{low}}\leq{r}\leq{}\bar{r}_{\mathrm{up}}$, where
$\bar{r}_{\mathrm{low}}=2r'/3$ and $\bar{r}_{\mathrm{up}}=\min\{2r',1/2\}$. 
Hence, in the worst case, the admissible set of the learning parameters $\nu$ and $\mu$ is
given as 
\begin{align}
 \label{eqn:worst-case_par_region} 
 \begin{array}{l}
  \displaystyle
   \overline{\Lambda}_{\mathrm{low}}=\{(\nu,\mu)\,:\,0<\nu-\mu<2(\bar{r}_{\mathrm{low}}-2\mu)\},\\
  \displaystyle  \ 
   \overline{\Lambda}_{\mathrm{up}}=\{(\nu,\mu)\,:\,0<\nu-\mu<2(\bar{r}_{\mathrm{up}}-2\mu)\}. 
 \end{array}
\end{align}
Given contaminated training data $D'$, for any $D$ of size $m$ with a label ratio
$r\in[\bar{r}_{\mathrm{low}},\bar{r}_{\mathrm{up}}]$ such that $D'\in\mathcal{D}_{\mu{m}}$ with 
$\mu<\bar{r}_{\mathrm{low}}/2$, 
robust $(\nu,\mu)$-SVM with $(\nu,\mu)\in\overline{\Lambda}_{\mathrm{low}}$ provides a
classifier with the breakdown point $\mu$. 
The parameter $(\nu,\mu)$ on the outside of $\overline{\Lambda}_{\mathrm{up}}$ is not necessary for the same reasons as
for $\Lambda_{\mathrm{up}}$. 
The acceptable region of $(\nu,\mu)$ is useful when the parameters are determined by a grid search based on cross
validation. The numerical experiments presented in Section~\ref{sec:Numerical_Experiments} applied a grid search to the
region $\overline{\Lambda}_{\mathrm{up}}$.

\section{Asymptotic Properties}
\label{sec:Asymptotic_Properties}
Let us consider the asymptotic properties of robust $(\nu,\mu)$-SVM. 
In the literature~\cite{NECO:Scholkopf+etal:2000}, a uniform bound of the generalization ability of the standard
$\nu$-SVM was calculated for the case that the class of classifiers is properly constrained such that the bias term in
the decision function is bounded in advance. 
Moreover, in~\cite{steinwart03:_optim_param_choic_suppor_vector_machin}, the asymptotic properties of $\nu$-SVM with an
unconstrained parameter space were investigated for a fixed $\nu$. 
To our knowledge, however, the statistical consistency of $\nu$-SVM has not yet been proved. 
The main difficulty comes from the fact that the loss function $-\nu\rho+[\rho-yg(x)]_+$ 
in \eqref{eqn:nu-svm} is not bounded from below. 
Here, therefore, we will study the statistical asymptotic properties of robust $(\nu,\mu)$-SVM on the basis of the
classical asymptotic theory of
L-estimators~\cite{serfling80:_approx_theor_mathem_statis,stigler73:_asymp_distr_trimm_mean}. 

Given a training dataset, the loss function of the robust $(\nu,\mu)$-SVM for a
fixed decision function $g(x)=f(x)+b$ is given by \eqref{eqn:diff_CVaR}. 
The sort of the negative margins,
\begin{align*}
r_{(1)}\geq\cdots\geq{}r_{\sigma(m)}
\end{align*}
for $r_i=-y_ig(x_i),i\in[m]$ 
is called the order statistics, and the linear sum of the order statistics is called the L-estimator. 
The asymptotic properties of L-estimators have been investigated in the field of mathematical
statistics (see \cite[Chap.~22]{vaart00:_asymp_statis}, 
\cite[Chap.~8]{serfling80:_approx_theor_mathem_statis} and references therein for details). 

We will derive the asymptotic distribution of \eqref{eqn:diff_CVaR} with reference to 
\cite{stigler73:_asymp_distr_trimm_mean}. 
Let us define $F_{g}(r)$ as the distribution function of the random variable $R_{g}=-Yg(X)$, 
in which $(X,Y)$ is generated from the population distribution of the training samples. 
Furthermore, the distribution function $G_{g}(r)$ is defined as the conditional probability, 
\begin{align*}
 G_{g}(r)=\Pr\{R_{g}\leq{r}\,|\,\bar{q}_{1-\nu}\leq{R}_{g}<\underline{q}_{1-\mu}\}, 
\end{align*}
where $\bar{q}_{1-\nu}$ and $\underline{q}_{1-\mu}$ are quantiles defined as 
\begin{align*}
 \bar{q}_{1-\nu}&=\sup\{r\,:\,F_{g}(r)\leq{1-\nu}\},\\
 \underline{q}_{1-\mu}&=\inf\{r\,:\,F_{g}(r)\geq{1-\mu}\}, 
\end{align*}
for $0<\mu<\nu<1$. The mean value under the distribution $G_{g}$ is denoted as 
$e_{g}$, i.e., 
\begin{align*}
 e_{g}=\Ebb[R_{g}\,|\,\bar{q}_{1-\nu}\leq{R_{g}}<\underline{q}_{1-\mu}], 
\end{align*}
which is nothing but a trimmed mean of $R_{g}$. 
In addition, let $T_m$ be 
\begin{align*}
 T_m =\frac{1}{(\nu-\mu)m}\sum_{i=\mu{m}+1}^{\nu{m}}\!\!\!\!{}r_{(i)}. 
\end{align*}
According to \cite{stigler73:_asymp_distr_trimm_mean}, the asymptotic distribution of
$\sqrt{m}(T_m-e_{g})$ is expressed by a transformation of a three-dimensional normal
distribution. 
Hence, the random variable $T_m$ converges in probability to $e_{g}$. 
We omit the detailed definition of the asymptotic distribution of $T_m$
(see \cite{stigler73:_asymp_distr_trimm_mean}). 

The asymptotic distribution of $\sqrt{m}(T_m-e_{g})$ has an interesting property. 
Suppose that $\sqrt{m}(T_m-e_{g})$ converges in law to a random variable $Z_{g}$, 
that is distributed from the above asymptotic distribution. 
Let $B_\mu$ be the length of the interval $F_{g}^{-1}(1-\mu)$, as shown in
Figure~\ref{fig:Plot-separate_densities}. When the probability density of  
$F_{g}$ is strictly positive, $B_\mu$ equals zero. 
In addition, suppose that the length of the interval $F_{g}^{-1}(1-\nu)$ is zero. 
Then, the mean value of $Z_{g}$ can be expressed as
\begin{align*}
 \Ebb[Z_{g}]=\frac{B_\mu\sqrt{\mu(1-\mu)}}{\sqrt{2\pi}(\nu-\mu)}, 
\end{align*}
as shown in \cite{stigler73:_asymp_distr_trimm_mean}. 
As a result, the trimmed mean of the negative margins in \eqref{eqn:diff_CVaR} is asymptotically represented as 
\begin{align*}
\phantom{=} (\nu-\mu)\cdot\frac{1}{(\nu-\mu)m}\!\sum_{i=\mu{m}+1}^{\nu{m}}\!\! r_{\sigma(i)} 
=
 (\nu-\mu)e_g
 +O\left(\frac{B_\mu}{\sqrt{m}}\right)
 +O_p\left(\frac{1}{\sqrt{m}}\right), 
\end{align*}
where $O_p(\cdot)$ is the probabilistic order defined in \cite[Chap.~2]{vaart00:_asymp_statis}. 
The above equation is a pointwise approximation at each $(f,b)\in\mathcal{H}\times\Rbb$. 

\begin{figure}[t]
 \begin{center}
  \includegraphics[scale=0.6]{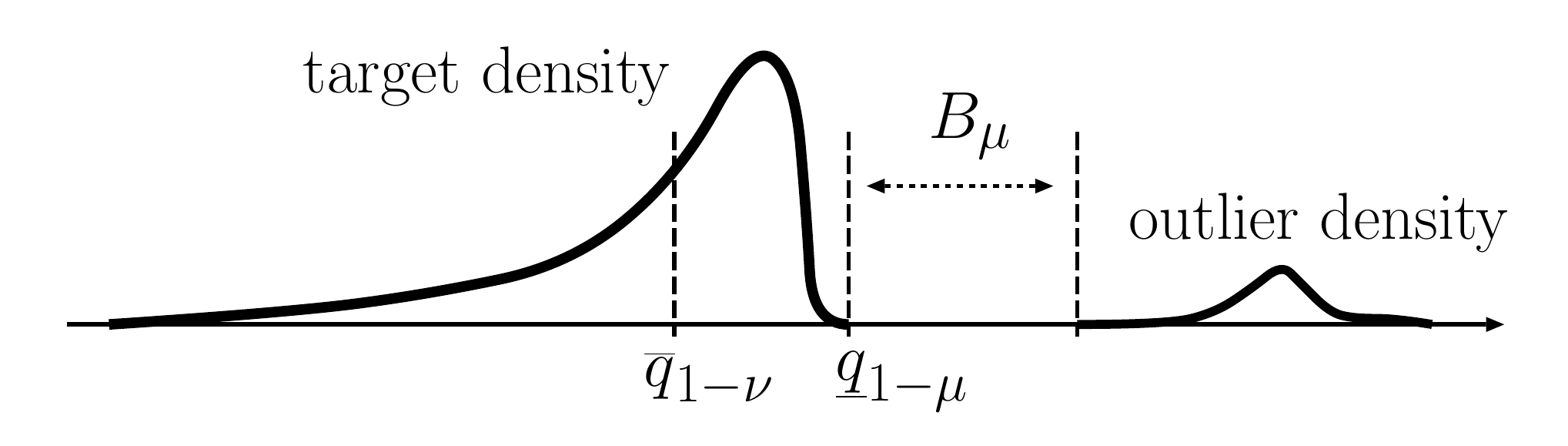}
  \caption{Probability density consisting of two components: the target density and
  outlier density. The gap between two components is the $1-\mu$ quantile of the length $B_\mu$.}
 \label{fig:Plot-separate_densities}
 \end{center}
\end{figure}

In light of the above argument, let us consider the statistical properties of robust $(\nu,\mu)$-SVM. 
Robust $(\nu,\mu)$-SVM in~\eqref{eqn:robust_nu_svm} and ROD in~\eqref{eqn:ROD}
can be made equivalent by appropriately setting the learning parameters $\nu,\mu$ and $\lambda$, 
where the outlier indicator $\eta$ in ROD can take any real number in $[0,1]^m$. 
The discussion in~\cite{xu06:_robus_suppor_vector_machin_train} on setting the parameter $\mu$ in ROD is based on the 
following observation. 
When $\mu$ is small, most $\eta_i$'s take one, and the rest take zero, while, for large $\mu$, all values of $\eta$ fall
below one. This phenomenon is called the second order phase transition in the maximal value of $\eta$. The authors
reported that the phase transition occurs at the value of $\mu$ that corresponds to the true ratio of outliers. 

The above observation is plausible. 
Let us consider robust $(\nu,\mu)$-SVM with $\eta\in[0,1]^m$ that corresponds to ROD with the learning parameters
$\lambda$ and $\mu$. 
Suppose that the probability density of the negative margin, $F_{g}(r)$, is separated into two components as shown in
Fig.~\ref{fig:Plot-separate_densities} and that the true outlier ratio is $\mu_0$. 
When $\mu$ is less than $\mu_0$, there are still some outliers that have not been removed from the training data. 
Hence, negative margins $r_i,i\in[m]$ can take a wide range of real values, and a tie will not occur. 
As a result, the outlier indicator $\eta_i$ tends to take only zero or 
one, even  when $\eta_i$ can take a real number in the interval $[0,1]$. 
If $\mu$ is larger than $\mu_0$, all outliers can be removed from the training data. 
In such a case, 
$\overline{q}_{1-\nu}$ and $\underline{q}_{1-\mu}$ will be close to each other, and some negative margins $r_i$ with
$\eta_i>0$ will concentrate around $\underline{q}_{1-\mu}$. 
If some negative margins take exactly the same value, the outlier indicators on those
samples can take real values in the open interval $(0,1)$. 
Since ROD solves a semidefinite relaxation of the non-convex problem, 
it is conceivable that the numerical solution $\eta$ in ROD will have a similar feature even if the negative margins do
not take exactly the same value.

\section{Numerical Experiments}
\label{sec:Numerical_Experiments}
We conducted numerical experiments on synthetic and benchmark datasets to compare some variants of SVMs. 
The DC algorithm was used to obtain a classifier in the case of robust $(\nu,\mu)$-SVM and robust $C$-SVM using the ramp loss. 
The DC algorithm for robust $C$-SVM is presented in~\cite{collobert06:_tradin}. We used CPLEX to solve the convex quadratic problems. 

\subsection{Breakdown Point}
\label{subsec:Breakdown_Point}
Let us consider the validity of inequality~\eqref{eqn:key_inequality} in
Theorem~\ref{theorem:breakdown-point-optvalue}. 
In the numerical experiments, the original data $D$ was generated using {\tt mlbench.spirals}
in the mlbench library of the R language~\cite{team14:_r}. 
Given an outlier ratio $\mu$, positive samples of size $\mu{m}$ were randomly chosen from
$D$, and they were replaced with randomly generated outliers 
to obtain a contaminated dataset $D'\in\mathcal{D}_{\mu{m}}$. 
The original data $D$ and an example of the contaminated data $D'\in\mathcal{D}_{\mu{m}}$
are shown in Fig.~\ref{fig:org_contam_data}. 
The decision function $g(x)=f(x)+b$ was estimated from $D'$ by using robust $(\nu,\mu)$-SVM. 
Here, the true outlier ratio $\mu$ was used as the parameter of the learning algorithm. 
The norms of $f$ and $b$ were then evaluated. 
The above process was repeated 30 times for each parameters $(\nu,\mu)$, and the maximum value of $\|f\|_{\mathcal{H}}$ and $|b|$
was computed. 

\begin{figure}[t]
\begin{center}
\begin{tabular}{cc}
 original data $D$  &
 \quad{} example of $D'\in\mathcal{D}_{\mu{m}}$\\  
 \includegraphics[scale=0.32]{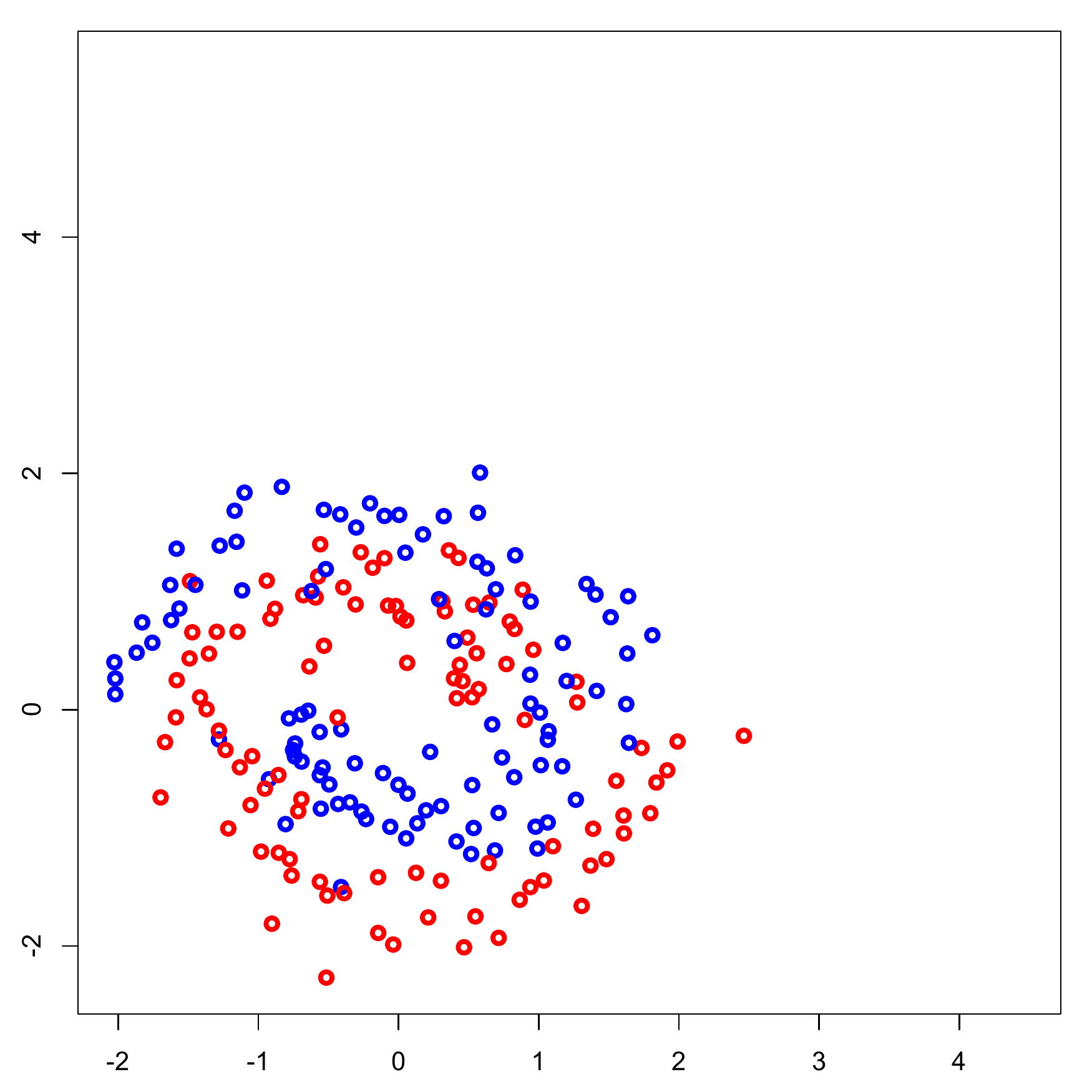}\quad & \quad
 \includegraphics[scale=0.32]{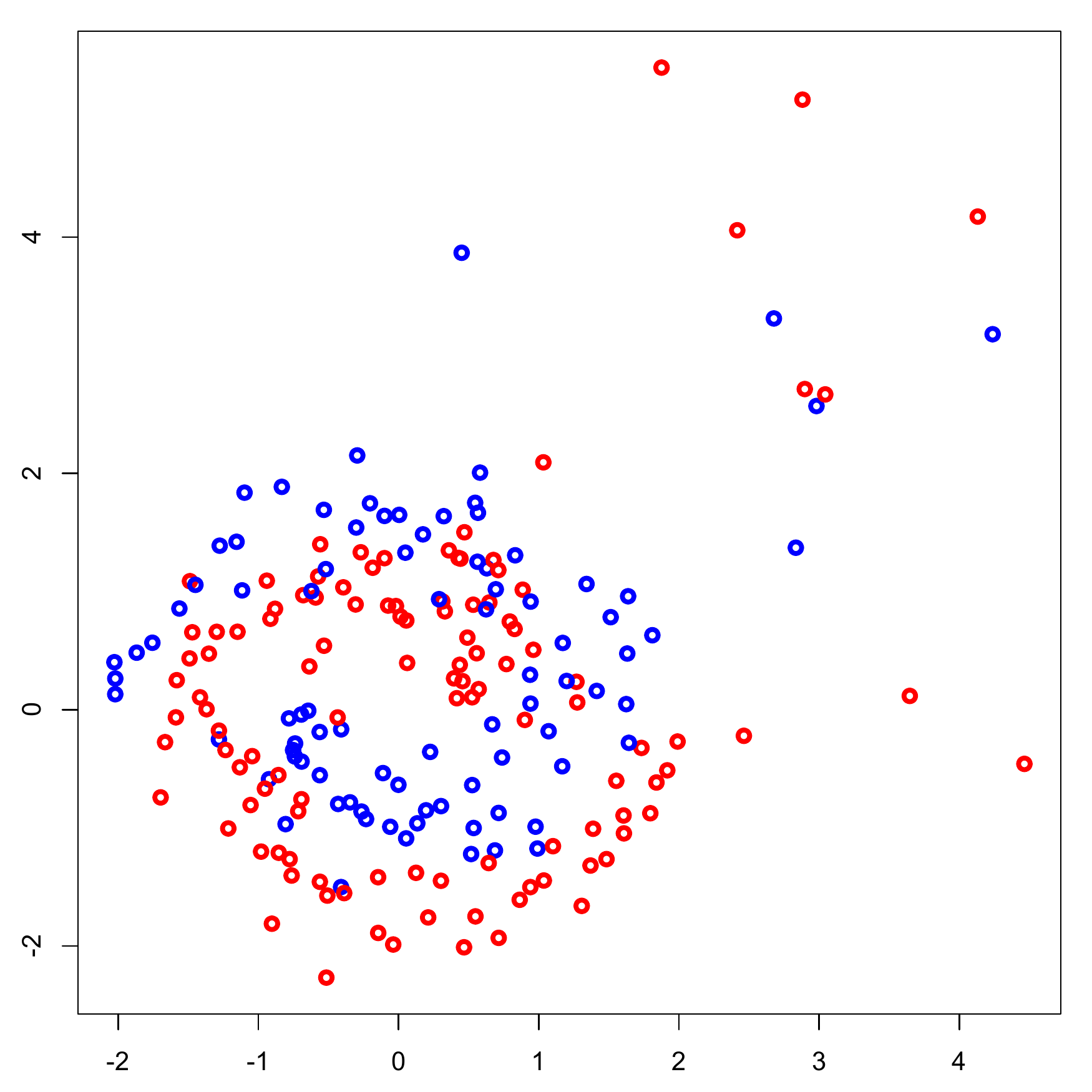}
\end{tabular}
 \caption{The left panel shows the original data $D$, and the right panel shows the contaminated data
 $D'\in\mathcal{D}_{\mu{m}}$. In this example, the sample size is $m=200$, and  
 the outlier ratio is $\mu=0.1$. }
 \label{fig:org_contam_data}
\end{center}
\end{figure}

Figure~\ref{fig:estimator_and_testerror} shows the results of the numerical experiments. 
The maximum norm of the estimated decision function is plotted for the parameter
$(\mu,\nu-\mu)$ on the same axis as Fig.~\ref{fig:breakdown-ponit}. 
The top (bottom) panels show the results for a Gaussian (linear) kernel. 
The left and middle columns show the maximum norm of $f$ and $b$, respectively. 
The maximum test errors are presented in the right column. 
In all panels, the red points denote the top 50 percent of values, 
and the asterisk ($\ast$) is the point that violates the inequality 
$\nu-\mu\leq2(r-2\mu)$. 
In this example, the numerical results agree with the theoretical analysis in Section~\ref{sec:Breakdown_Point}; i.e., the norm
becomes large when the inequality $\nu-\mu\leq 2(r-2\mu)$ is violated. 
Accordingly, the test error gets close to $0.5$---no information for classification. Even when the unbounded linear kernel is
used, robustness is confirmed for the parameters in the left lower region in the right panel of Fig.~\ref{fig:breakdown-ponit}. 

In the bottom right panel, the test error gets large when the inequality
$\nu-\mu\leq2(r-2\mu)$ holds. This result comes from the problem setup. 
Even with non-contaminated data, the test error of the standard $\nu$-SVM is approximately $0.5$,
because the linear kernel works poorly for spiral data. Thus, the worst-case test error can go beyond $0.5$. 
For the parameter at which \eqref{eqn:key_inequality} is violated, the test error is always close to $0.5$. 
Thus, a learning method with such parameters does not provide any useful information for classification. 

\begin{figure}
\begin{center}
\hspace*{-10mm}
\begin{tabular}{ccc}
 \multicolumn{3}{c}{(a) Gaussian kernel} \\
 \includegraphics[scale=0.32]{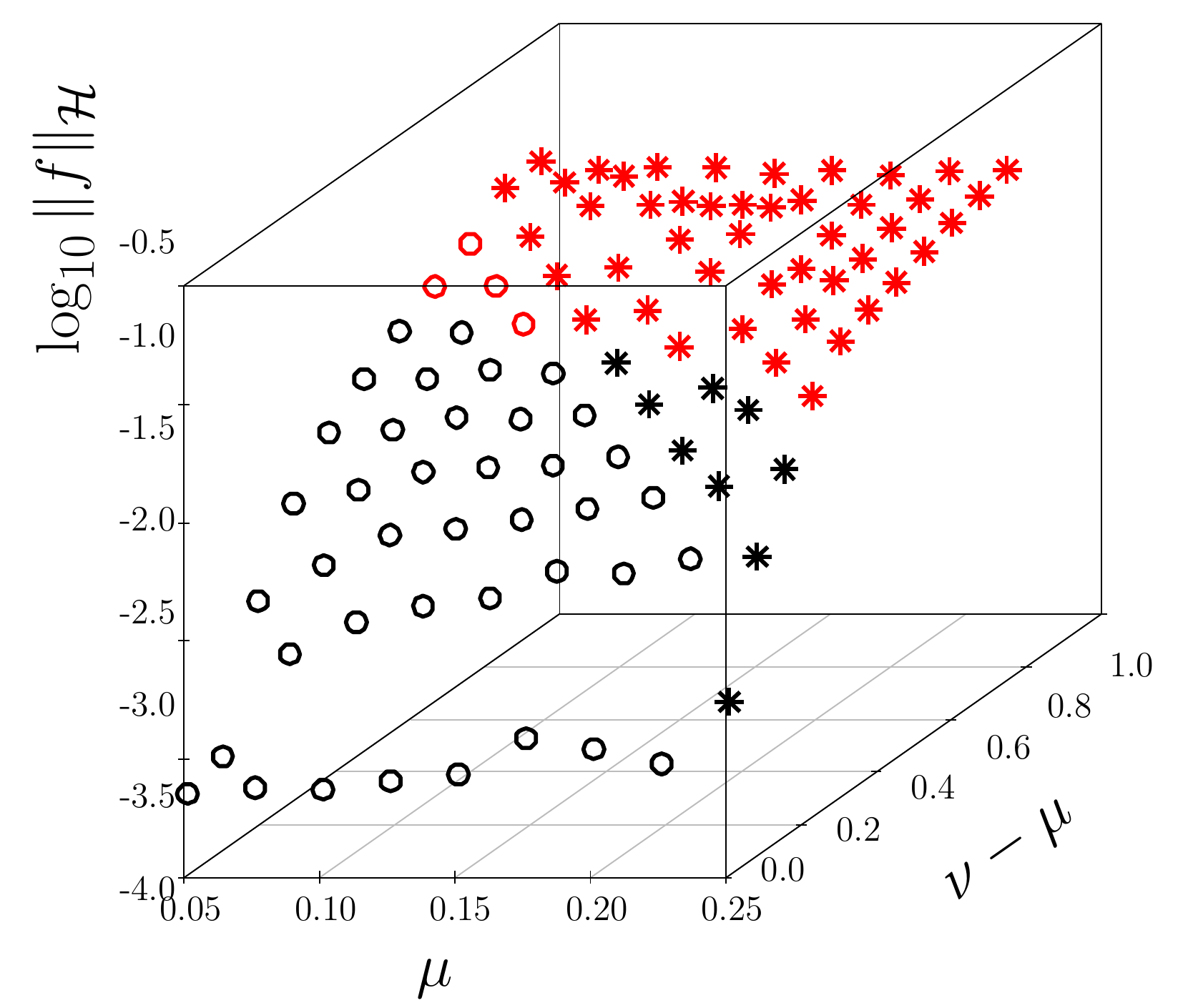} & 
 \includegraphics[scale=0.32]{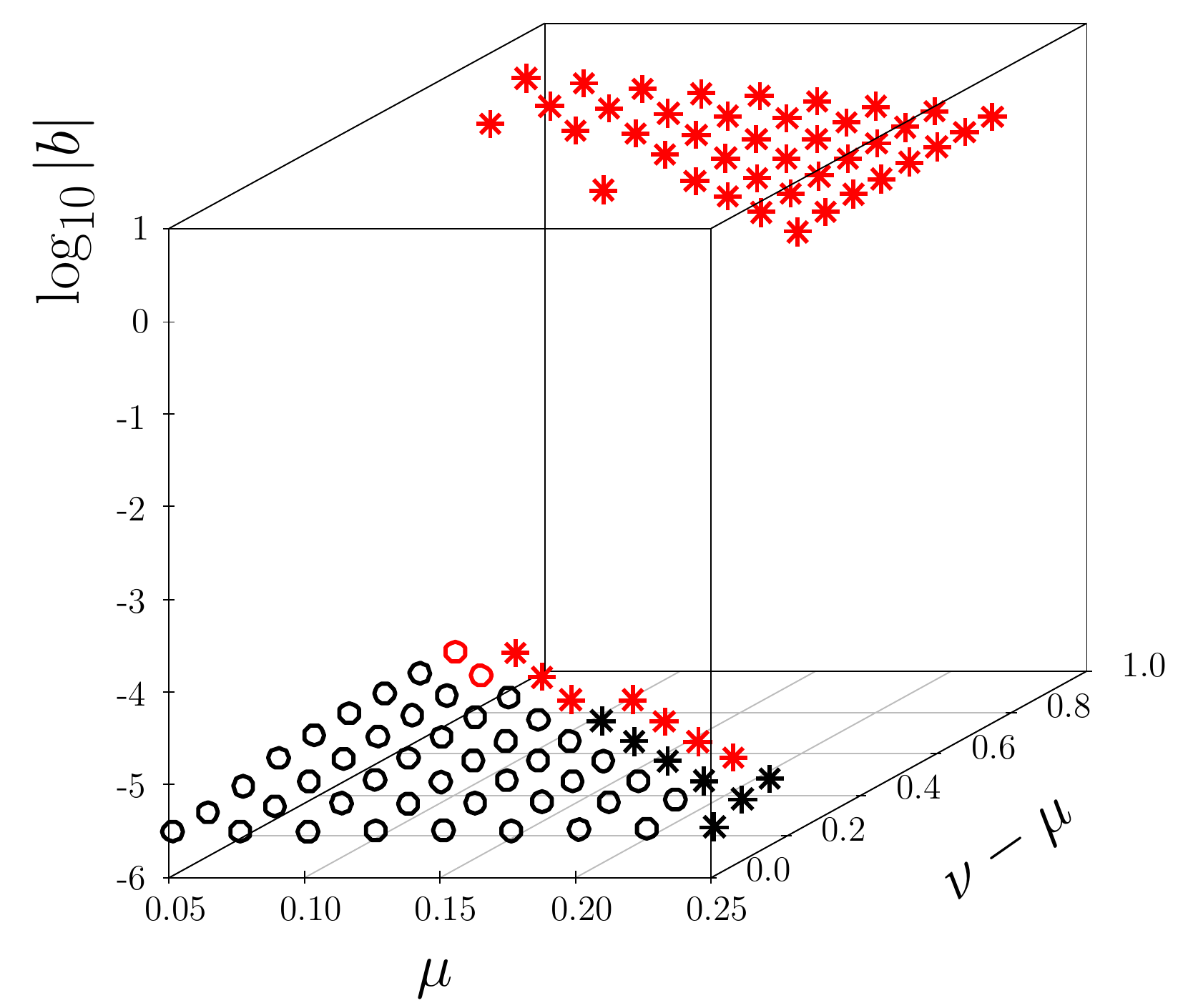} & 
 \includegraphics[scale=0.32]{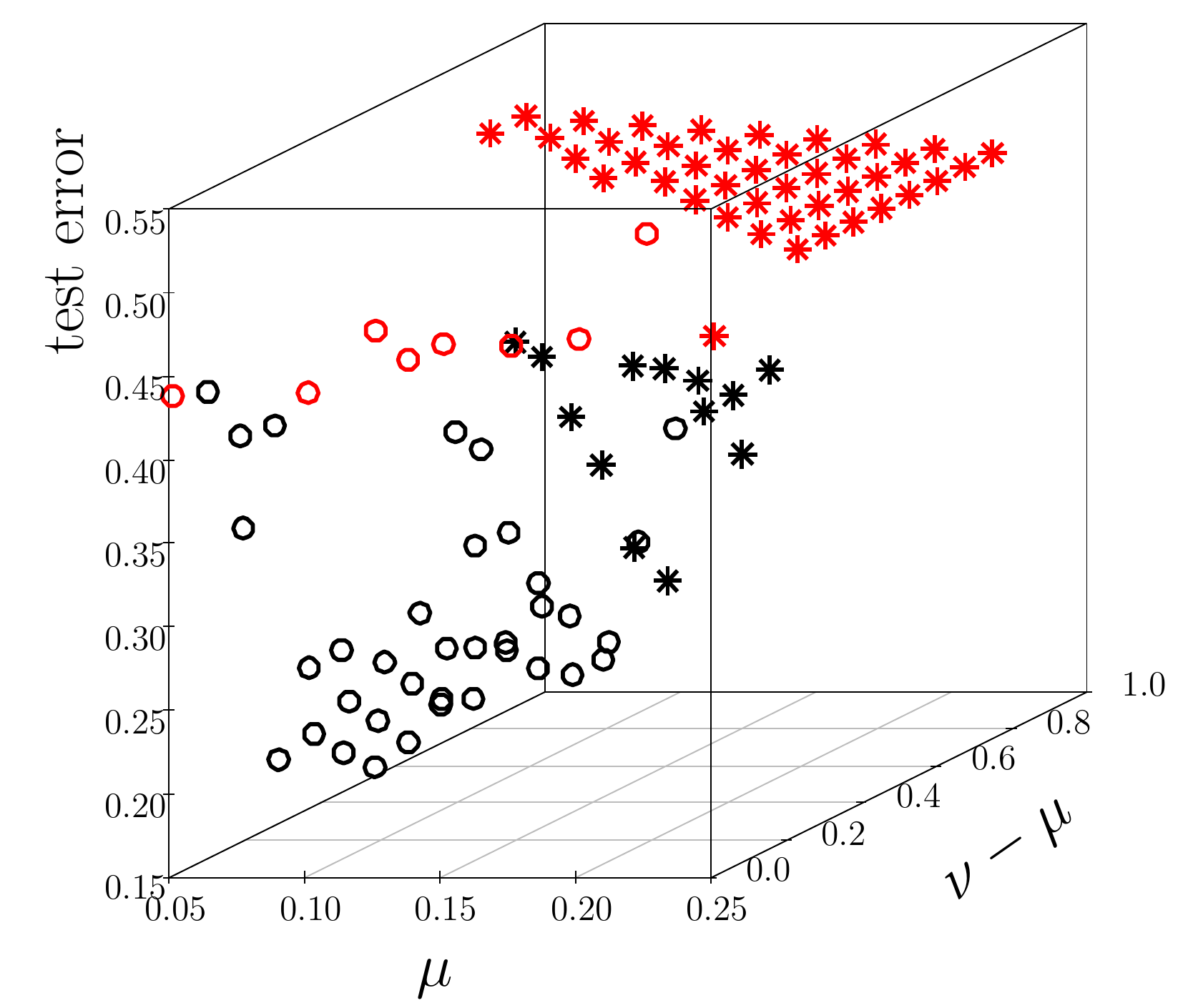} \\
 plot of $\log_{10}\|f\|_{\mathcal{H}}$  &  plot of $\log_{10}|b|$  &   plot of  test error\\ \\
 \multicolumn{3}{c}{(b) Linear kernel} \\
 \includegraphics[scale=0.32]{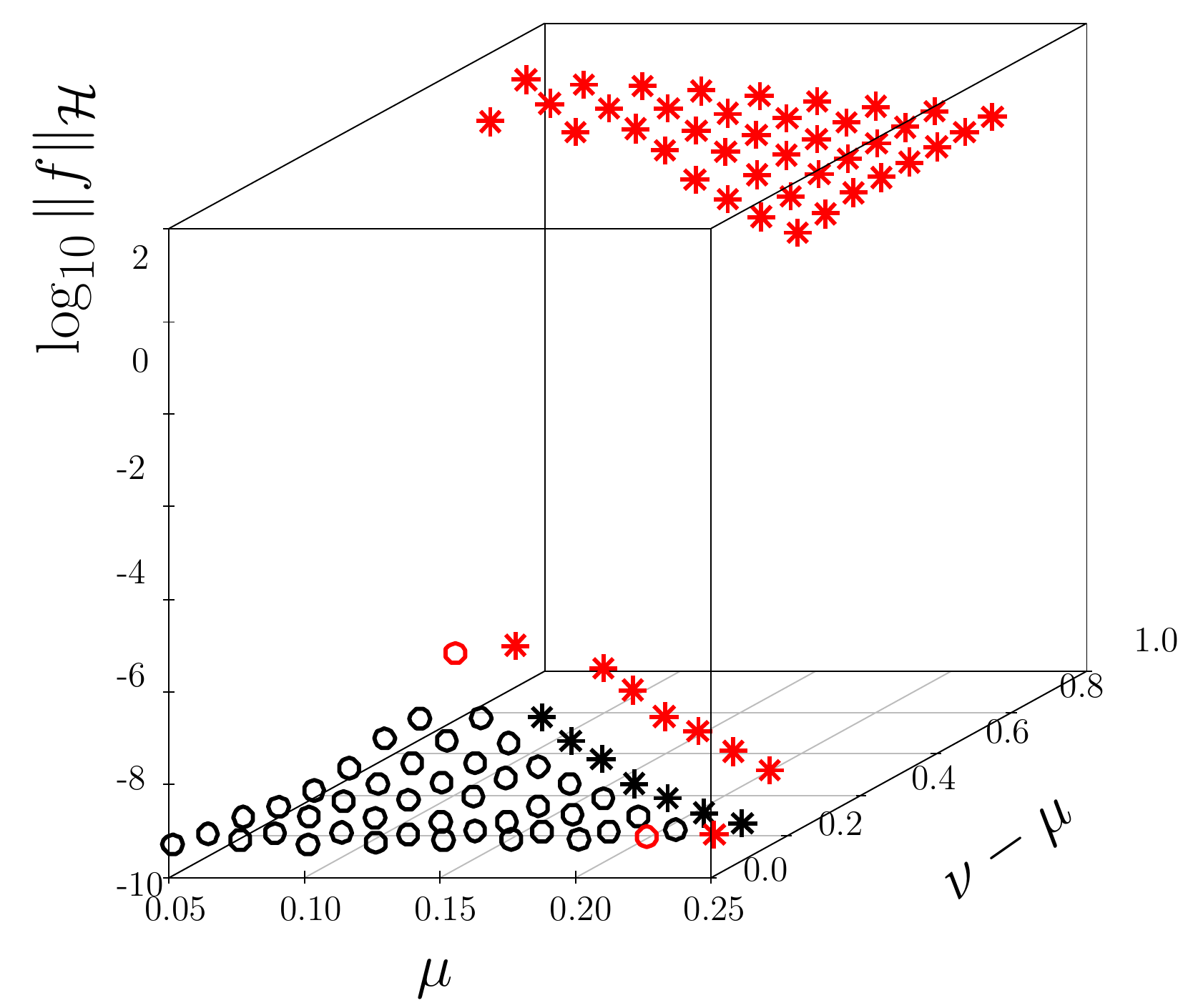} & 
 \includegraphics[scale=0.32]{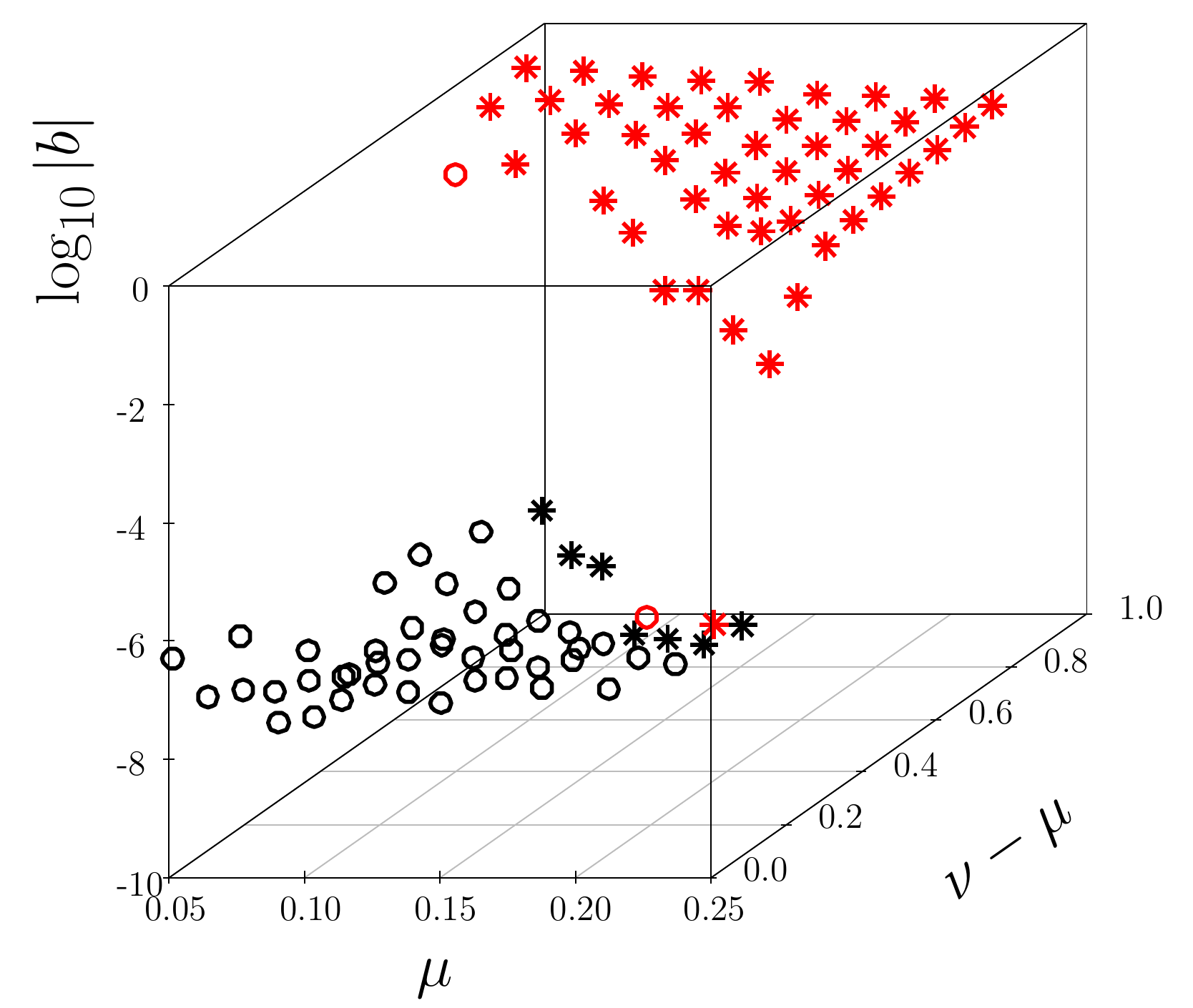} & 
 \includegraphics[scale=0.32]{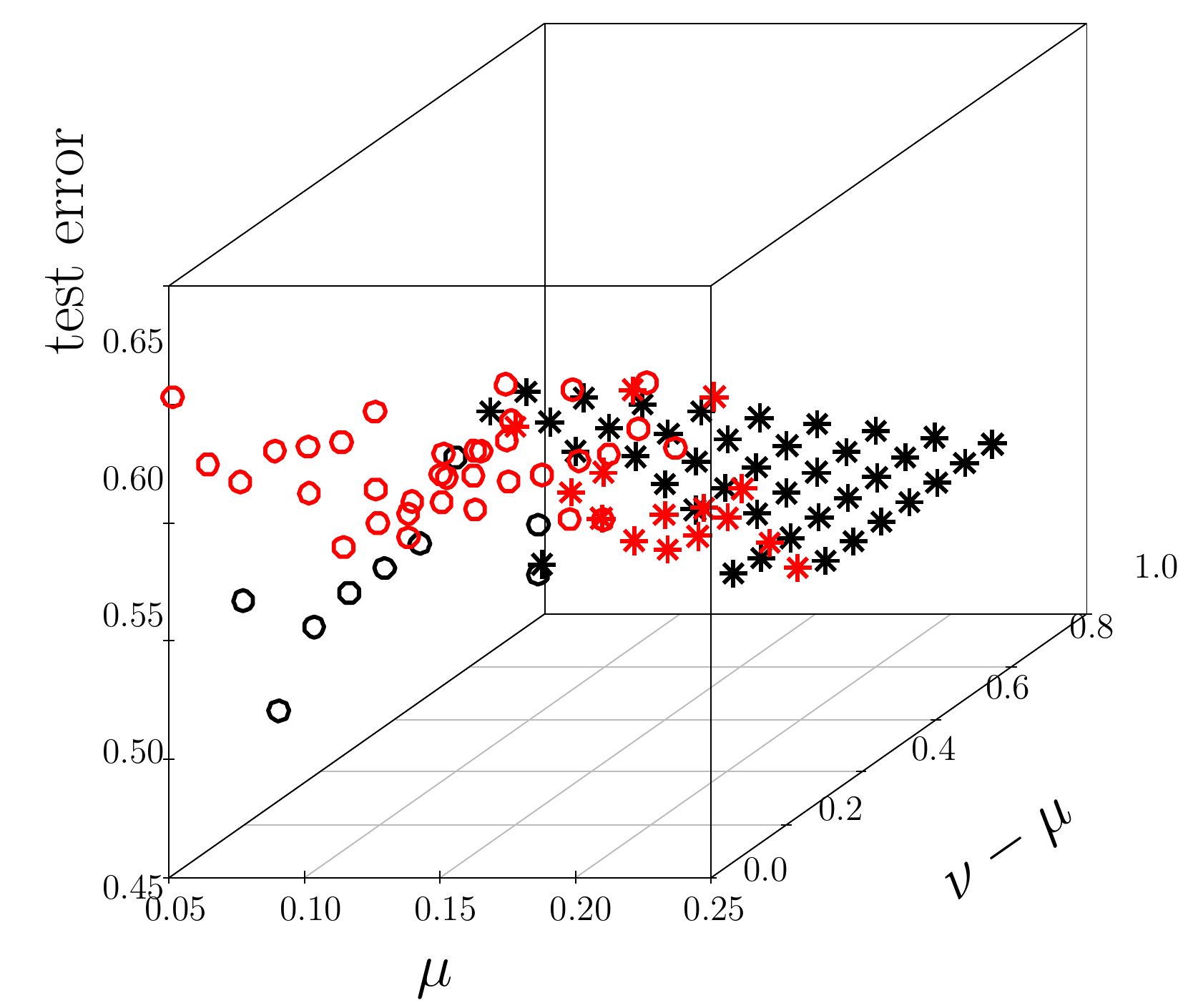} \\
 plot of $\log_{10}\|f\|_{\mathcal{H}}$  &  plot of $\log_{10}|b|$  &  plot of test error\\
\end{tabular}
\end{center}
 \caption{Plots of the maximum norms and the worst-case test errors. 
 The top (Bottom) panels show the results for a Gaussian (linear) kernel. 
 Red points mean the top 50 percent of values, and the asterisk ($\ast$) is the point
 that violates the inequality $\nu-\mu\leq 2(r-2\mu)$. } 
 \label{fig:estimator_and_testerror}
\end{figure}

\subsection{Prediction Accuracy}
\label{subsec:Prediction_Accuracy}
We compared the generalization ability of the robust $(\nu,\mu)$-SVM with existing classifiers such as standard
$\nu$-SVM and robust $C$-SVM using the ramp loss. The datasets are presented in Table~\ref{table:sim_result}. 
All the datasets are provided in the {\tt mlbench} and {\tt kernlab} libraries of the R language~\cite{team14:_r}. 
In all the datasets, the number of positive samples is less than or equal to that of negative samples. 
Before running the learning algorithms, we standardized each input variable with mean zero and standard deviation one. 

We randomly split the dataset into training and test sets. To evaluate the robustness, the training data was
contaminated by outliers. More precisely, we randomly chose positive labeled samples in the training data and changed
their labels to negative; i.e., we added outliers by flipping the labels. After that, robust $(\nu,\mu)$-SVM, robust
$C$-SVM using the ramp loss, and the standard $\nu$-SVM were used to obtain classifiers from the contaminated training
dataset. The prediction accuracy of each classifier was then evaluated over test data that had no outliers. Linear and
Gaussian kernels were employed for each learning algorithm. 

The learning parameters, such as $\mu,\nu$, and $C$, were determined by conducting a grid search based on five-fold cross
validation over the training data.  
For robust $(\nu,\mu)$-SVM, the parameter $(\mu,\nu)$ was selected from the region $\overline{\Lambda}_{\mathrm{up}}$ in
\eqref{eqn:worst-case_par_region}.  
For standard $\nu$-SVM, the candidate of the regularization parameter $\nu$ was selected
from the interval $(0,2r')$, where $r'$ is the label ratio of the contaminated training data. 
For robust $C$-SVM, the regularization parameter $C$ was selected from the interval $[10^{-7},\,10^{7}]$. 
In the grid search of the parameters, 24 or 25 candidates were examined for each learning method. 
Thus, we needed to solve convex or non-convex optimization problems more
than $24\times5$ times in order to obtain a classifier. The above process was repeated 30 times, 
and the average test error was calculated. 

The results are presented in Table~\ref{table:sim_result}. 
For non-contaminated training data, robust $(\nu,\mu)$-SVM and robust $C$-SVM were comparable to the standard $\nu$-SVM. 
When the outlier ratio is high, we can conclude that robust $(\nu,\mu)$-SVM and robust $C$-SVM tend to work better than
the standard $\nu$-SVM. 
In this experiment, the kernel function does not affect the relative prediction performance of these learning methods. 
In large datasets such as spam and Satellite, robust $(\nu,\mu)$-SVM tends to outperform robust $C$-SVM. 
When learning parameters, such as $\nu,\mu$, and $C$, are appropriately chosen by using a large dataset, 
learning algorithms with plural learning parameters clearly work better than those with a single learning parameter. 
In addition, in robust $C$-SVM, there is a difficulty in choosing the regularization parameter. 
Indeed, the parameter $C$ does not have a clear meaning, and thus, it is not straightforward to determine the candidates 
of $C$ in the grid search optimization. 
In contrast, the parameter $\nu$ in $\nu$-SVM and its robust variant
has a clear meaning, i.e., a lower bound of the ratio of support vectors and an upper 
bound of the margin error on the training data~\cite{NECO:Scholkopf+etal:2000}. 
Such clear meaning is of great help to choose candidate points of regularization parameters. 

We conducted another experiment in which the learning parameters $\nu,\mu$ and $C$ were determined using only one
validation set, i.e., non-cross validation (the details are not presented here). 
The dataset was split into training, validation and test sets. The learning parameters, $\nu, \mu$, and $C$, that minimized 
the prediction error on the validation set were selected.  
This method greatly reduced the computational cost of the cross validation. 
However, robust $(\nu,\mu)$-SVM did not necessarily produce a better classifier compared with the other methods. 
Since robust $(\nu,\mu)$-SVM has two learning parameters, we need to carefully select them using cross validations rather than
simple validations in order to achieve high prediction accuracy.

\begin{table}[p]
 \caption{
 Test error and standard deviation of robust $(\nu,\mu)$-SVM, robust $C$-SVM and $\nu$-SVM. 
 The dimension of the input vector, the number of training samples, the number of test
 samples, and the label ratio of all samples with no outliers are shown for each dataset. 
 Linear and Gaussian kernels were used to build the classifier in each method. 
 The outlier ratio in the training data ranged from 0\% to 15\%, and the test error was evaluated on
 the non-contaminated test data. The asterisk $(\ast)$ means the best result for a fixed kernel function in each dataset, 
 and the double asterisks $(\ast\ast)$ mean that the corresponding method is 5\%
 significant compared with the second best method under a one-sided t-test. 
 The learning parameters were determined by five-fold cross validation on the contaminated training data.}
 \label{table:sim_result}
 \begin{center}\footnotesize
 \hspace*{-12mm}
 \begin{tabular}{rccccccc}
  \multicolumn{7}{l}{Sonar: $\dim{x}=60$, $\#$train=104, $\#$test=104, $r=0.466$.}\\\hline
 &\multicolumn{3}{c}{Linear kernel} && \multicolumn{3}{c}{Gaussian kernel} \\
 outlier
 & \shortstack{robust\\ $(\nu,\mu)$-SVM} & \shortstack{robust\\ $C$-SVM}  & $\nu$-SVM &
 & \shortstack{robust\\ $(\nu,\mu)$-SVM} & \shortstack{robust\\ $C$-SVM}  & $\nu$-SVM \\
 \cline{2-4}  \cline{6-8}
 0\%&\phantom{*}.258(.032)&.270(.038)&          *.256(.051)&&          *.179(.038) &\phantom{**}.188(.043) &\phantom{*}.181(.039)\\
 5\%&          *.256(.039)&.273(.047)&\phantom{*}.258(.046)&&\phantom{*}.225(.042) &\phantom{**}.229(.051) &          *.224(.061)\\
10\%&          *.297(.060)&.306(.067)&\phantom{*}.314(.060)&&\phantom{*}.249(.059) &          **.230(.046) &\phantom{*}.259(.062)\\
15\%&          *.329(.061)&.339(.064)&\phantom{*}.345(.062)&&\phantom{*}.280(.053) &\phantom{*}*.280(.050) &\phantom{*}.294(.064)\\ \\
\multicolumn{7}{l}{BreastCancer: $\dim{x}=10$, $\#$train=350, $\#$test=349, $r=0.345$. }\\ \hline
 &\multicolumn{3}{c}{Linear kernel} && \multicolumn{3}{c}{Gaussian kernel} \\ 
 outlier
 & \shortstack{robust\\ $(\nu,\mu)$-SVM} & \shortstack{robust\\ $C$-SVM}  & $\nu$-SVM &
 & \shortstack{robust\\ $(\nu,\mu)$-SVM} & \shortstack{robust\\ $C$-SVM}  & $\nu$-SVM \\
 \cline{2-4}  \cline{6-8}
 0\%&.033(.010) &\phantom{*}.035(.008) &*.033(.006)&&\phantom{*}*.032(.008)&\phantom{*}.035(.012) &.033(.010)\\
 5\%&.034(.009) &*.034(.010) &\phantom{*}.043(.015)&&\phantom{*}*.032(.005)&\phantom{*}.033(.007) &.033(.006)\\
10\%&.055(.015) &*.051(.026) &\phantom{*}.076(.036)&&          **.035(.008)&\phantom{*}.043(.025) &.038(.008)\\
15\%&.136(.058) &*.120(.050) &\phantom{*}.148(.058)&&\phantom{**}.160(.083)&          *.145(.070) &.150(.110)\\ \\
  \multicolumn{7}{l}{PimaIndiansDiabetes: $\dim{x}=8$, $\#$train=384, $\#$test=384, $r=0.349$. }\\ \hline
 &\multicolumn{3}{c}{Linear kernel} && \multicolumn{3}{c}{Gaussian kernel} \\
 outlier
 & \shortstack{robust\\ $(\nu,\mu)$-SVM} & \shortstack{robust\\ $C$-SVM}  & $\nu$-SVM &
 & \shortstack{robust\\ $(\nu,\mu)$-SVM} & \shortstack{robust\\ $C$-SVM}  & $\nu$-SVM \\
 \cline{2-4}  \cline{6-8}
 0\%&\phantom{**}.237(.018) &*.232(.014) &.246(.018)&&*.238(.021) &\phantom{*}.240(.019) &.243(.022)\\
 5\%&\phantom{**}.239(.019) &*.237(.016) &.269(.036)&&*.264(.025) &\phantom{*}.267(.024) &.273(.024)\\
10\%&**.280(.046) &\phantom{*}.299(.042) &.330(.030)&&\phantom{*}.302(.039) &*.293(.036) &.315(.038)\\
15\%&**.338(.042) &\phantom{*}.349(.030) &.351(.026)&&*.344(.028) &\phantom{*}.344(.031) &.353(.016)\\ \\
  \multicolumn{7}{l}{spam: $\dim{x}=57$, $\#$train=1000, $\#$test=3601, $r=0.394$. }\\ \hline
 &\multicolumn{3}{c}{Linear kernel} && \multicolumn{3}{c}{Gaussian kernel} \\
 outlier
 & \shortstack{robust\\ $(\nu,\mu)$-SVM} & \shortstack{robust\\ $C$-SVM}  & $\nu$-SVM &
 & \shortstack{robust\\ $(\nu,\mu)$-SVM} & \shortstack{robust\\ $C$-SVM}  & $\nu$-SVM \\ 
 \cline{2-4}  \cline{6-8}
 0\%&\phantom{**}.083(.005) &.088(.006) &*.083(.005)           &&\phantom{**}.081(.005) &.086(.006) &*.081(.006)\\
 5\%&          **.094(.008) &.104(.013) &\phantom{*}.109(.010) &&\phantom{**}.095(.008) &.097(.009) &*.095(.008)\\
10\%&          **.129(.022) &.152(.020) &\phantom{*}.166(.067) &&\phantom{*}*.129(.015) &.133(.017) & .141(.030)\\
15\%&          **.201(.029) &.240(.030) &\phantom{*}.256(.091) &&          **.206(.018) &.223(.030) & .240(.055)\\ \\
  \multicolumn{7}{l}{Satellite: $\dim{x}=36$, $\#$train=2000, $\#$test=4435, $r=0.234$. }\\ \hline
 &\multicolumn{3}{c}{Linear kernel} && \multicolumn{3}{c}{Gaussian kernel} \\
 outlier
 & \shortstack{robust\\ $(\nu,\mu)$-SVM} & \shortstack{robust\\ $C$-SVM}  & $\nu$-SVM &
 & \shortstack{robust\\ $(\nu,\mu)$-SVM} & \shortstack{robust\\ $C$-SVM}  & $\nu$-SVM \\ 
 \cline{2-4}  \cline{6-8}
 0\%&\phantom{**}.097(.004) &\phantom{*}.096(.003) &          **.094(.003) && \phantom{*}.069(.031) &.067(.004)           &**.063(.004)\\
 5\%&\phantom{**}.101(.003) &          *.100(.005) &\phantom{**}.100(.004) &&           *.072(.015) &.078(.007) &\phantom{**}.078(.043)\\
10\%&          **.148(.020) &\phantom{*}.161(.026) &\phantom{**}.161(.019) &&           *.117(.034) &.126(.040) &\phantom{**}.137(.027)\\
 \end{tabular}
 \end{center}
\end{table}

\section{Concluding Remarks}
\label{sec:Concluding_Remarks}
We presented robust $(\nu,\mu)$-SVM and studied its statistical properties. 
The robustness property was analyzed by computing the exact breakdown point. 
As a result, we obtained inequalities for the learning parameters $\nu$ and $\mu$
that guarantee the robustness of the learning algorithm. 
The statistical theory of the L-estimator was then used to investigate the asymptotic behavior of the classifier. 
Numerical experiments showed that the inequalities are critical to obtaining a robust classifier.  
The prediction accuracy of the proposed method was numerically compared with those of other methods, 
and it was found that the proposed method with carefully chosen learning parameters delivers
more robust classifiers than those of other methods such as standard $\nu$-SVM and robust $C$-SVM using the ramp loss. 
In the future, we will explore the robustness properties of more general learning methods. 
Another important issue is to develop efficient optimization algorithms. 
Although the DC algorithm~\cite{collobert06:_tradin,Pham:1997} and convex
relaxation~\cite{xu06:_robus_suppor_vector_machin_train,yu12:_polyn_form_robus_regres} are
promising methods, more scalable algorithms will be required to deal with massive datasets 
that are often contaminated by outliers. 

\appendix

\section{Derivation of DC Algorithm}
\label{appendix:DCA}
According to \eqref{eqn:DC-representation}, the objective function of the robust $(\nu,\mu)$-SVM is expressed as
$\Phi({\alpha},b,\rho)=\psi_0({\alpha},b,\rho)-\psi_1({\alpha},b)$ using the convex functions $\psi_0$ and $\psi_1$
defined as  
\begin{align*}
 \psi_0({\alpha},b,\rho)&=\frac{1}{2}{\alpha}^TK{\alpha}-\nu\rho+\frac{1}{m}\sum_{i=1}^{m}[\rho+r_i],\\
 \psi_1({\alpha},b)&=\max_{{\eta}\in{E_\mu}}\frac{1}{m}\sum_{i=1}^{m}(1-\eta_i)r_i, 
\end{align*}
where $r_i$ is the negative margin $r_i=-y_i(\sum_{i=1}^{m}K_{ij}\alpha_i+b)$ and $K\in\Rbb^{m\times{m}}$ is the Gram
matrix defined by $K_{ij}=k(x_i,x_j),\,i,j\in[m]$. Let ${\alpha}_t,b_t,\rho_t$ be the solution obtained after $t$ 
iterations of the DC algorithm. Then, the solution is updated to the optimal solution of 
\begin{align}
 \label{eqn:DC_primal}
\min_{{\alpha},b,\rho} \psi_0({\alpha},b,\rho)-u^T{\alpha}-vb,
\end{align}
where $(u,v)\in\Rbb^{m+1}$ with $u\in\Rbb^m, v\in\Rbb$ is an element of the subgradient of $\psi_1$ at $({\alpha}_t,b_t)$. 
The subgradient of $\psi_1$ is given as 
\begin{align*}
 &\phantom{=}\partial\psi_1({\alpha}_t,b_t)\\
 &=
 \mathrm{conv}
 \bigg\{
 (u,v)\,:\,
 u=-\frac{1}{m}K({{y}}\circ(1_m-{\eta})),\,
 v=-\frac{1}{m}{{y}}^T(1_m-{\eta}),\\
 &\phantom{= \mathrm{conv}\bigg\{}\ \text{where 
 ${\eta}$ is a maximum solution of the problem in $\psi_1({\alpha}_t,b_t)$}
 \bigg\}, 
\end{align*}
where $\mathrm{conv}S$ denotes the convex hull of the set $S$. 
As shown in Algorithm~\ref{alg:DC_alg_robust_numuSVM}, a parameter ${\eta}$ that meets the condition in the above
subgradient is obtained from the sort of the negative margin of the decision function defined from
$({\alpha}_t,b_t)$. 
The dual problem of \eqref{eqn:DC_primal} is presented in \eqref{eqn:DC_alg_quadprob}, 
up to a constant term that is independent of the optimization parameter.

\section{Proofs of Theorems} 

\subsection{Proof of Theorem~\ref{theorem:breakdown-point-optvalue}}
\label{appendix:proof_breakdown_obj}

The proof is decomposed into two lemmas. Lemma~\ref{lemma:breakdown-point-ER-SVM} shows that
condition (i) is sufficient for condition (ii), 
and Lemma~\ref{lemma:unbounded-breakdown-point-ER-SVM-1} shows that condition (ii) does not hold if inequality
\eqref{eqn:key_inequality} is violated. 
For the dataset $D=\{(x_i,y_i):i\in[m]\}$, let $I_{+}$ and $I_{-}$ be the index sets defined as 
$I_{\pm}=\{i:y_i=\pm1\}$. When the parameter $\mu$ is equal to zero, the theorem holds
according to the argument on the standard $\nu$-SVM~\cite{CriBur00}.  
Below, we assume $\mu>0$. 

\begin{lemma}
 \label{lemma:breakdown-point-ER-SVM}
 Under the assumptions of Theorem~\ref{theorem:breakdown-point-optvalue}, 
 condition (i) leads to condition (ii). 
\end{lemma}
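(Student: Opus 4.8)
The plan is to exploit the geometric dual form \eqref{eqn:dual_robust_nuSVM}. Since $E_\mu$ is a finite set, $\mathrm{opt}(\nu,\mu;D')=\max_{\eta\in E_\mu}\min_{f\in\mathcal{V}_\eta[\nu,\mu;D']}\|f\|_{\mathcal{H}}^2$ is bounded as soon as the minimum-norm element of each Minkowski difference $\mathcal{V}_\eta[\nu,\mu;D']=\mathcal{U}_\eta^{+}[\nu,\mu;D']\ominus\mathcal{U}_\eta^{-}[\nu,\mu;D']$ is controlled by a constant that is independent of both the indicator $\eta\in E_\mu$ and the contaminated dataset $D'\in\mathcal{D}_{\mu m}$. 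For any feasible $u^{+}\in\mathcal{U}_\eta^{+}$ and $u^{-}\in\mathcal{U}_\eta^{-}$ one has $\min_{f\in\mathcal{V}_\eta}\|f\|_{\mathcal{H}}^2\le\|u^{+}-u^{-}\|_{\mathcal{H}}^2\le(\|u^{+}\|_{\mathcal{H}}+\|u^{-}\|_{\mathcal{H}})^2$, so the whole argument reduces to exhibiting, for every $\eta$ and $D'$, feasible points $u^{\pm}$ whose norms are controlled by the \emph{original} data $D$ alone.

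The idea is to build $u^{+}$ and $u^{-}$ out of \emph{clean active} points only, that is, points lying in $D'\cap D$ that carry $\eta_i=1$. First I would count these. Contamination alters at most $\mu m$ of the original points, so among the clean points at least $|I_{+}|-\mu m\ge(r-\mu)m$ are positively labeled and at least $|I_{-}|-\mu m\ge(r-\mu)m$ are negatively labeled. The indicator $\eta$ deactivates at most $\mu m$ further points. Because the contamination budget and the deactivation budget are two \emph{separate} budgets of size $\mu m$ each, the number of clean active points of each label is at least $(r-\mu)m-\mu m=(r-2\mu)m$; crucially, this holds simultaneously for both labels and for every configuration, since each bound uses only that the two budgets are individually capped by $\mu m$.

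Next I would check feasibility of a clean-only convex combination. In the reduced convex hull every active coefficient is capped by $2/((\nu-\mu)m)$, so a set of $N$ clean active points can carry total weight $1$ exactly when $N\cdot 2/((\nu-\mu)m)\ge1$, i.e.\ $N\ge(\nu-\mu)m/2$. Combined with the count above, a clean-only element of $\mathcal{U}_\eta^{+}$ (resp.\ $\mathcal{U}_\eta^{-}$) exists precisely when $(r-2\mu)m\ge(\nu-\mu)m/2$, which is the key inequality \eqref{eqn:key_inequality}; this is where condition (i) enters. Having placed all weight on clean points, convexity gives $\|u^{\pm}\|_{\mathcal{H}}\le\max_i\sqrt{k(x_i,x_i)}$, the maximum being over the finitely many points of $D$; denote this bound by $\sqrt{\kappa}$. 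Then $\min_{f\in\mathcal{V}_\eta}\|f\|_{\mathcal{H}}^2\le4\kappa$ uniformly, and taking the maximum over $\eta\in E_\mu$ and the supremum over $D'\in\mathcal{D}_{\mu m}$ yields $\mathrm{opt}(\nu,\mu;D')\le4\kappa<\infty$, which is condition (ii).

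The main obstacle I anticipate is the bookkeeping in the counting step: one must verify that the adversary, who selects both the $\mu m$ contaminating points and the worst-case $\eta$, cannot drive the clean active count of either label below $(r-2\mu)m$, and in particular that the two budgets of size $\mu m$ act independently so that \emph{both} labels simultaneously retain enough clean active points. Pinning down the constant $(r-2\mu)m$ and matching it against the coefficient cap $2/((\nu-\mu)m)$ so that \eqref{eqn:key_inequality} emerges as the exact feasibility threshold is the delicate part; the norm estimate that follows is routine.
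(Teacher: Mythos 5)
Your proposal is correct and follows essentially the same route as the paper's proof: both construct feasible elements of the reduced convex hulls supported only on clean points with $\eta_i=1$, count that each label retains at least $(r-2\mu)m$ such points (the paper phrases this via a parameter $c$ with $\mu=r/(c+2)$), match that count against the coefficient cap $2/((\nu-\mu)m)$ to recover inequality~\eqref{eqn:key_inequality} as the feasibility threshold, and then bound the resulting norm by a constant depending only on the non-contaminated data $D$ (the paper via compactness of $\mathcal{C}[D]$, you via the cruder but equally valid bound $4\max_i k(x_i,x_i)$). No gap.
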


\begin{proof}
 [Proof of Lemma \ref{lemma:breakdown-point-ER-SVM}]
 We show that $\mathcal{V}_\eta[\nu,\mu;D']$ is not empty for any
 $D'\in\mathcal{D}_{\mu{m}}$. 
 For a parameter $\mu$ such that $\mu<r/2$, let $c>0$ be a positive constant
 satisfying $\mu=r/(c+2)$. 
 Then, \eqref{eqn:key_inequality} 
 is expressed as $\nu\leq(r+2cr)/(c+2)$. 
 For a contaminated dataset $D'=\{(x_i',y'_i):i\in[m]\}\in\mathcal{D}_{\mu{m}}$, 
 let us define $\widetilde{I}_{+}\subset{}I_{+}$ as an index set such that 
 $(x_i,y_i)\in{D}$ for $i\in{}\widetilde{I}_{+}$ is replaced with $(x_i',y_i')\in{D'}$ as
 an outlier. In the same way, $\widetilde{I}_{-}\subset{}I_{-}$ is defined for negative
 samples in $D$. 
 Therefore, for any index $i$ in 
 $I_{+}\setminus\widetilde{I}_{+}$ or $I_{-}\setminus\widetilde{I}_{-}$, 
 we have $(x_i,y_i)=(x_i',y_i')$. 
 The assumptions of the theorem ensure
 $|\widetilde{I}_{+}|+|\widetilde{I}_{-}|\leq\mu{m}$. 
 From $\mu{}m={\min\{|I_+|,|I_-|\}/(c+2)}$, we obtain 
 \begin{align*}
  |\widetilde{I}_{+}|&\leq{}|I_{+}|/(c+2)<(c+1)|I_{+}|/(c+2)\leq|I_{+}\setminus\widetilde{I}_{+}|,\\
  |\widetilde{I}_{-}|&\leq{}|I_{-}|/(c+2)<(c+1)|I_{-}|/(c+2)\leq|I_{-}\setminus\widetilde{I}_{-}|. 
 \end{align*}
 Given ${\eta}\in{E}_\mu$, the sets $\widetilde{\mathcal{U}}_{\eta}^{+}[\nu,\mu;D']$ and
 $\widetilde{\mathcal{U}}_{\eta}^{-}[\nu,\mu;D']$ are defined by 
 \begin{align*}
  &\phantom{=}
  \widetilde{\mathcal{U}}_{\eta}^{\pm}[\nu,\mu;D']\\
  &=
  \bigg\{  \sum_{i:y_i'=\pm1}\gamma_i'k(\cdot,x_i')\in\mathcal{H}\,:\,\sum_{i:y_i'=\pm1}\!\gamma_i'=1,\ 
  0\leq \gamma_i'\leq \frac{2\eta_i}{(\nu-\mu)m}, \ \ 
  \text{$\gamma_i'=0$ for $i\not\in{}I_{\pm}\setminus\widetilde{I}_{\pm}$}
  \bigg\}. 
 \end{align*}
 Note that 
 $\widetilde{\mathcal{U}}_{\eta}^{\pm}[\nu,\mu;D']\subset\mathcal{U}_{\eta}^{\pm}[\nu,\mu;D']$
 holds because of the additional constraint, 
 $\gamma_i'=0$ for $i\not\in{}I_{\pm}\setminus\widetilde{I}_{\pm}$. 
 In addition, we have
 $\widetilde{\mathcal{U}}_{\eta}^{\pm}[\nu,\mu;D'] \subset \mathrm{conv}\{k(\cdot,x_i)\,:\,i\in{}I_{\pm}\}$, 
 since only the element $k(\cdot,x_i')$ with $i\in{}I_{\pm}\setminus\widetilde{I}_{\pm}$,
 i.e. $x_i'=x_i$, can have a non-zero coefficient $\gamma_i'$ in
 $\widetilde{\mathcal{U}}_{\eta}^{\pm}[\nu,\mu;D']$. 

 We prove that $\widetilde{\mathcal{U}}_{\eta}^{+}[\nu,\mu;D']$ and $\widetilde{\mathcal{U}}_{\eta}^{-}[\nu,\mu;D']$ are
 not empty. 
 The size of the index sets $\{i\in{}I_{\pm}\setminus\widetilde{I}_{\pm}\,:\,\eta_i=1\}$
 is bounded below by
 \begin{align*}
  |\{i\in{}I_{\pm}\setminus\widetilde{I}_{\pm}\,:\,\eta_i=1\}|  \geq
  \frac{(c+1)|I_{\pm}|}{c+2}-\mu{m}\geq  \frac{c|I_{\pm}|}{c+2} >0. 
 \end{align*}
 The inequality $\nu-\mu\leq2(r-2\mu)$ is equivalent to 
 $1\leq2cr/((\nu-\mu)(c+2))$. Hence, we have 
 \begin{align*}
  1\leq\frac{2cr}{(\nu-\mu)(c+2)}\leq  \frac{2}{(\nu-\mu)m}\cdot\frac{c|I_{\pm}|}{c+2}
  \leq
  \frac{2}{(\nu-\mu)m}\cdot|\{i\in{}I_{\pm}\setminus\widetilde{I}_{\pm}\,:\,\eta_i=1\}|, 
 \end{align*}
 implying that the sets $\widetilde{\mathcal{U}}_{\eta}^{\pm}[\nu,\mu;D']$ are not empty. 
 Indeed, the coefficients defined by 
 \begin{align*}
  \gamma_j'= \frac{1}{|\{i\in{}I_{\pm}\setminus\widetilde{I}_{\pm}\,:\,\eta_i=1\}|}\leq \frac{2}{(\nu-\mu)m}
 \end{align*}
 for $j\in\{i\in{}I_{\pm}\setminus\widetilde{I}_{\pm}\,:\,\eta_i=1\}$ 
 and otherwise $\gamma_j'=0$ admit all the constraints in  $\widetilde{\mathcal{U}}_{\eta}^{\pm}[\nu,\mu;D']$. 
 Since 
 $\emptyset\neq\widetilde{\mathcal{U}}_{\eta}^{\pm}[\nu,\mu;D']\subset\mathcal{U}_{\eta}^{\pm}[\nu,\mu;D']$
 holds, we have 
 $\emptyset\neq\widetilde{\mathcal{V}}_{\eta}[\nu,\mu;D']\subset\mathcal{V}_{\eta}[\nu,\mu;D']$, 
 where
 $\widetilde{\mathcal{V}}_{\eta}[\nu,\mu;D']
 =
 \widetilde{\mathcal{U}}_{\eta}^{+}[\nu,\mu;D']\ominus\widetilde{\mathcal{U}}_{\eta}^{-}[\nu,\mu;D']$. 

 Now, let us prove the inequality
 \begin{align}
  \label{eqn:inequalities_tobe_prove}
  0\leq\max_{D'\in\mathcal{D}_{\mu{m}}}\max_{\eta\in{E}_\mu}\inf_{f\in\mathcal{V}_{\eta}[\nu,\mu;D']}
  \|f\|_\mathcal{H}^2 < \infty. 
 \end{align}
 The above argument leads to 
 \begin{align*}
  \min_{f\in\mathcal{V}_{\eta}[\nu,\mu;D']} \|f\|_{\mathcal{H}}^2
  \leq 
  \min_{f\in\widetilde{\mathcal{V}}_{\eta}[\nu,\mu;D']}\|f\|_{\mathcal{H}}^2<\infty
 \end{align*}
 for any $\eta\in{E}_\mu$. 
 Let us define 
 $\mathcal{C}[D]=\mathrm{conv}\{k(\cdot,x_i)\,:\,i\in{I}_{+}\}\ominus\mathrm{conv}\{k(\cdot,x_i)\,:\,i\in{I}_{-}\}$ 
 for the original dataset~$D$. Then, the inclusion relation
 $\widetilde{\mathcal{U}}_{\eta}^{\pm}[\nu,\mu;D']\subset\mathrm{conv}\{k(\cdot,x_i)\,:\,i\in{}I_{\pm}\}$ 
 leads to 
 $\widetilde{\mathcal{V}}_{\eta}[\nu,\mu;D']\subset\mathcal{C}[D]$. 
 Hence, we obtain 
 \begin{align*}
  \mathrm{opt}(\nu,\mu;D')
  &=
  \max_{\eta\in{E}_\mu}\min_{f\in\mathcal{V}_{\eta}[\nu,\mu;D']} \|f\|_{\mathcal{H}}^2 \\
  &\leq
  \max_{\eta\in{E}_\mu}\min_{f\in\widetilde{\mathcal{V}}_{\eta}[\nu,\mu;D']}  \|f\|_{\mathcal{H}}^2 \\
  &\leq\ 
  \max_{\eta\in{E}_\mu}\max_{f\in\widetilde{\mathcal{V}}_{\eta}[\nu,\mu;D']} \|f\|_{\mathcal{H}}^2 \\
  &\leq\ 
  \max_{f\in\mathcal{C}[D]}\|f\|_{\mathcal{H}}^2 \\
  &<\infty. 
 \end{align*}
 The boundedness of $\max_{f\in\mathcal{C}[D]}\|f\|_{\mathcal{H}}^2$ comes from the compactness of
 $\mathcal{C}[D]$ and the continuity of the norm. 
 More precisely, it is bounded above by twice the maximum eigenvalue of the Gram matrix defined from the
 non-contaminated data $D$. 
 The upper bound does not depend on the contaminated dataset $D'\in\mathcal{D}_{\mu{m}}$. 
 Thus, the first inequality of \eqref{eqn:inequalities_tobe_prove} holds. 
\end{proof}

\begin{lemma}
 \label{lemma:unbounded-breakdown-point-ER-SVM-1}  
 Under the condition of Theorem~\ref{theorem:breakdown-point-optvalue}, 
 we assume $\nu-\mu>2(r-2\mu)$. Then, we have 
 \begin{align*}
  \sup\{\mathrm{opt}(\mu,\nu;D'): {D'\in\mathcal{D}_{\mu{m}}}\}=\infty. 
 \end{align*}
\end{lemma}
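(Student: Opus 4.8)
The plan is to establish the unboundedness by exhibiting a single contaminated dataset $D'\in\mathcal{D}_{\mu{m}}$ together with one outlier indicator $\eta\in{E}_\mu$ for which a reduced convex hull is \emph{empty}. This is the exact complement of Lemma~\ref{lemma:breakdown-point-ER-SVM}, whose whole point was to guarantee non-emptiness of $\mathcal{V}_\eta[\nu,\mu;D']$. Once a hull is empty, the inner minimization in \eqref{eqn:dual_robust_nuSVM} is taken over the empty set and equals $+\infty$, so the maximum over $\eta\in{E}_\mu$ gives $\mathrm{opt}(\nu,\mu;D')=\infty$ for that single $D'$; a fortiori the supremum over $\mathcal{D}_{\mu{m}}$ is infinite. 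Thus the geometric reformulation of Section~\ref{subsec:Dual_Problem} reduces the whole statement to a counting argument on the caps $\gamma_i'\leq 2\eta_i/((\nu-\mu)m)$.

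Concretely, I would assume without loss of generality that $I_{+}$ is the minority class, so $|I_{+}|=rm$. First I construct $D'$ by flipping the labels of $\mu{m}$ samples in $I_{+}$ from $+1$ to $-1$; this alters exactly $\mu{m}$ points, hence $D'\in\mathcal{D}_{\mu{m}}$, and the positive index set of $D'$ has size $|I_{+}'|=(r-\mu)m$. Next I choose $\eta$ that sets $\eta_i=0$ on $\mu{m}$ of the surviving positive samples—possible precisely because $\mu<r/2$ forces $(r-\mu)m>\mu{m}$—and $\eta_i=1$ elsewhere, so that $\sum_i\eta_i=m(1-\mu)$ and $\eta\in{E}_\mu$. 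The two \emph{separate} budgets of size $\mu{m}$, one for contamination and one for the indicator, then stack to leave only $\sum_{i\in{I}_{+}'}\eta_i=(r-2\mu)m$ admissible positive weight.

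The crux is the arithmetic matching the violated inequality. The hull $\mathcal{U}_{\eta}^{+}[\nu,\mu;D']$ is non-empty if and only if its caps can sum to at least one, i.e. $\sum_{i\in{I}_{+}'}2\eta_i/((\nu-\mu)m)\geq1$, which rearranges to $2(r-2\mu)\geq\nu-\mu$. Under the hypothesis $\nu-\mu>2(r-2\mu)$ this fails, so $\mathcal{U}_{\eta}^{+}[\nu,\mu;D']=\emptyset$, whence $\mathcal{V}_\eta[\nu,\mu;D']=\emptyset$ and $\min_{f\in\mathcal{V}_\eta[\nu,\mu;D']}\|f\|_{\mathcal{H}}^2=+\infty$. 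Inserting this $\eta$ into the maximum in \eqref{eqn:dual_robust_nuSVM} yields $\mathrm{opt}(\nu,\mu;D')=\infty$, completing the argument.

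I expect the main obstacle to be conceptual rather than computational: recognizing that the breakdown mechanism here is emptiness of a reduced hull, and that the correct adversarial strategy spends the contamination budget on relabeling the \emph{minority} class while the indicator budget zeros out \emph{further} minority points, so the two budgets combine additively to drive the effective count down to $(r-2\mu)m$. Merely pushing outliers to infinity while preserving their labels would reduce the count by only $\mu{m}$ and would need the strictly stronger condition $\nu-\mu>2(r-\mu)$; the sharp threshold $2(r-2\mu)$ appears only from stacking both budgets. Should one wish to avoid the empty-set convention for the inner infimum, I would instead verify directly that the objective of \eqref{eqn:robust_nu_svm} for this fixed $\eta$ tends to $-\infty$ along the ray $f=0$, $\rho=-b\to+\infty$, which drives $\mathrm{opt}(\nu,\mu;D')$ to infinity through \eqref{eqn:dual_robust_nuSVM}.
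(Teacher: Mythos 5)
Your proposal is correct and follows essentially the same route as the paper's proof: flip the labels of $\mu{m}$ minority-class samples to form $D'$, zero out $\mu{m}$ more of the survivors with $\eta$, and observe that the remaining $(r-2\mu)m$ caps of size $2/((\nu-\mu)m)$ cannot sum to one when $\nu-\mu>2(r-2\mu)$, so the reduced convex hull is empty and dual infeasibility forces $\mathrm{opt}(\nu,\mu;D')=\infty$. The only differences are notational: the paper takes the negative class as the minority and parametrizes $\mu=r/(c+2)$, whereas you work with $r$ and $\mu$ directly.
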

\begin{proof}
 [Proof of Lemma~\ref{lemma:unbounded-breakdown-point-ER-SVM-1}]
 We use the same notation as in the proof of Lemma~\ref{lemma:breakdown-point-ER-SVM}. 
 Without loss of generality, we assume $r=|I_{-}|/m$. 
 The parameter $\mu$ is expressed as $\mu=r/(c+2)$ for $c>0$. 
 We prove that there exists a feasible parameter $\eta\in{E}_\mu$ and a contaminated 
 training set $D'=\{(x_i',y_i'):i\in[m]\}\in\mathcal{D}_{\mu{m}}$ such that 
 $\mathcal{U}_\eta^{-}[\nu,\mu;D']=\emptyset$. 
 The construction of the dataset $D'$ is illustrated in Figure~\ref{fig:Plot-Lemma_opt_infinity}. 
 Suppose that $|\widetilde{I}_{+}|=0$ and $|\widetilde{I}_{-}|=\mu{m}$ and 
 that $y_i'=+1$ holds for all $i\in\widetilde{I}_{-}$, meaning that all outliers in $D'$ are made by flipping the labels
 of the negative samples in $D$. This is possible, because $\mu{m}<|I_{-}|/2<|I_{-}|$ holds. The inequality
 $\nu-\mu>2(r-2\mu)$ leads to 
 \begin{align}
  \label{eqn:Ineg_bound}
  1>\frac{2}{(\nu-\mu)m}\cdot\frac{c}{c+2}|I_{-}|. 
 \end{align}
 The outlier indicator $\eta'=(\eta_1',\ldots,\eta_m')\in{E_\mu}$ is defined by $\eta_i'=0$ for $\mu{m}$ samples in
 $I_{-}\setminus\widetilde{I}_{-}$, and $\eta_i'=1$ otherwise. This assignment is possible because 
 \begin{align*}
  |I_{-}\setminus\widetilde{I}_{-}|
  =|I_{-}|-\mu{m}  
  =\frac{c+1}{c+2}|I_{-}|
  >  
  \frac{1}{c+2}|I_{-}|
  =
  \mu{m}. 
 \end{align*}
 Then, we have 
 \begin{align}
  \label{proof:eq-1}
  |\{i\in{}I_{-}\setminus\widetilde{I}_{-}\,:\,\eta_i'=1\}|
 & =
  |I_{-}\setminus\widetilde{I}_{-}|-\mu{m} \nonumber\\
&  =
  |I_{-}|-2\mu{m} \nonumber\\
&  =
  \frac{c}{c+2}|I_{-}|. 
 \end{align}
 From \eqref{eqn:Ineg_bound} and \eqref{proof:eq-1}, we have
 \begin{align*}
  1>\frac{2}{(\nu-\mu)m}
  |\{i\in{}I_{-}\setminus\widetilde{I}_{-}\,:\,\eta_i'=1\}|. 
 \end{align*}
 In addition, $y_i'=-1$ holds only when $i\in{}I_{-}\setminus{}\widetilde{I}_{-}$. 
 Therefore, we have $\mathcal{U}_{\eta'}^{-}[\nu,\mu;D']=\emptyset$. 
 The infeasibility of the dual problem means the unboundedness of the primal problem. 
 Hence, there exists a contaminated dataset $D'\in\mathcal{D}_{\mu{m}}$ and
 an outlier indicator $\eta'\in{}E_\mu$ such that 
 \begin{align*}
  \mathrm{opt}(\mu,\nu;D')\geq \min_{f\in\mathcal{V}_{\eta'}[\nu,\mu;D']}\|f\|_{\mathcal{H}}^2=\infty
 \end{align*}
 holds. 
\end{proof}
\begin{figure}[t]
 \begin{center}
  \includegraphics[scale=0.8]{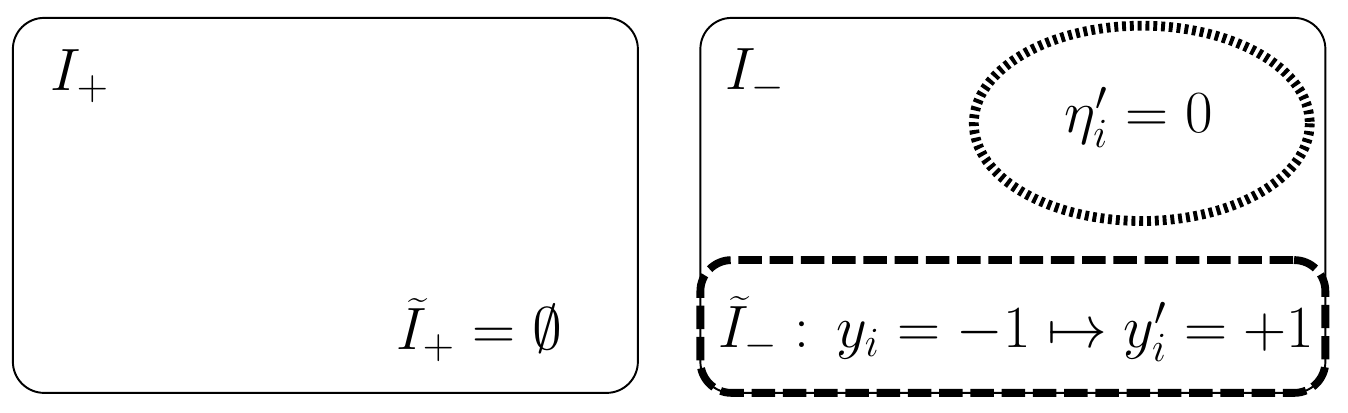}
 \caption{Index sets $\widetilde{I}_{\pm}$ and value of $\eta_i'$ defined in the proof of Lemma~\ref{lemma:unbounded-breakdown-point-ER-SVM-1}.} 
 \label{fig:Plot-Lemma_opt_infinity}
 \end{center}
\end{figure}

\subsection{Proof of Theorem~\ref{theorem:breakdown-point_upper_bound}}
\label{appendix:upperbound_breakdown_opt}
\begin{proof}
 For a rational number $\mu\in(0,1/4)$, there exists an $m\in\Nbb$ such that 
 $\mu{m}\in\Nbb$ and $2\mu{m}+1\leq{}m-(2\mu{m}+1)$ hold. 
 For such $m$, let $D=\{(x_i,y_i):i\in[m]\}$ be a training data such that 
 $|I_{-}|=2\mu{m}+1$ and $|I_{+}|=m-(2\mu{m}+1)$, where
 the index sets $I_{\pm}$ are defined in the proof of Appendix~\ref{appendix:proof_breakdown_obj}. 
 Since the label ratio of $D$ is $r=\min\{|I_{-}|,|I_{+}|\}/m=2\mu+1/m$, and we have $\mu<r/2$. 
 For $\mathcal{D}_{\mu{m}+1}$ defined from $D$, 
 let $D'=\{(x_i',y_i'):i\in[m]\}\in\mathcal{D}_{\mu{m}+1}$ be a contaminated dataset of $D$ such that 
 $\mu{m}+1$ outliers are made by flipping the labels of the negative samples in $D$. 
 Thus, there are $\mu{m}$ negative samples in $D'$. Let us define the outlier indicator 
 $\eta'=(\eta_1',\ldots,\eta_m')\in{E_\mu}$ such that $\eta_i'=0$ for $\mu{m}$ negative samples in
 $D'$. Then, any sample in $D'$ with $\eta_i'=1$ should be a positive one. Hence, 
 we have $\mathcal{U}_{\eta'}^{-}[\nu,\mu;D']=\emptyset$. 
 The infeasibility of the dual problem means that the primal problem is unbounded. 
 Thus, we obtain $\mathrm{opt}(\nu,\mu;D')=\infty$. 
\end{proof}

\subsection{Proof of Theorem~\ref{theorem:bounded_bias}} 
\label{appendix:proof_breakdown_bias}
Let us define $f_D+b_D$ with $f_D\in\mathcal{H},\,b_D\in\Rbb$ as the decision function estimated using robust $(\nu,\mu)$-SVM
based on the dataset~$D$. 

\begin{proof}
 The non-contaminated dataset is denoted as $D=\{(x_i,y_i):i\in[m]\}$. 
 For the dataset $D$, let $I_+$ and $I_{-}$ be the index sets defined by $I_{\pm}=\{i:y_i=\pm1\}$. 
 Under the conditions of Theorem~\ref{theorem:bounded_bias}, 
 inequality \eqref{eqn:key_inequality} holds.  
 Given a contaminated dataset $D'=\{(x_i',y'_i):i\in[m]\}\in\mathcal{D}_{\mu{m}-\ell}$, 
 let $r_i'(b)$ be the negative margin of $f_{D'}+b$, i.e., $r_i'(b)=-y_i'(f_{D'}(x_i')+b)$ for
 $(x_i',y_i')\in{D'}$. For $b\in\Rbb$, the function $\zeta(b)$ is defined as 
 \begin{align*}
  \zeta(b)=\frac{1}{m}\sum_{i\in{T_b}}r_i'(b), 
 \end{align*}
 where the index set $T_b$ is given by
 \begin{align*}
  T_b=\{\sigma(j)\in[m]\,:\,\mu{m}+1\leq{j}\leq\nu{m}\}
 \end{align*}
 for the sorted negative margins, $r_{\sigma(1)}'(b)\geq\cdots\geq{}r_{\sigma(m)}'(b)$. 
 For simplicity, we drop the dependency of the permutation $\sigma$ on $b$. 
 The estimated bias term $b_{D'}$ is the optimal solution of $\zeta(b)$ because of \eqref{eqn:diff_CVaR}. 
 The function $\zeta(b)$ is continuous. In addition, $\zeta(b)$ is linear on the interval such that $T_b$ is unchanged. 
 Hence, $\zeta(b)$ is a continuous piecewise linear function. 
 Below, we prove that the minimum solution of
 $\zeta(b)$ is bounded regardless of the contaminated dataset $D'\in\mathcal{D}_{\mu{m}-\ell}$. 

 For the non-contaminated data $D$, let $R$ be a positive real number such that
 \begin{align*}
  \sup\{|f_{D''}(x)|:(x,y)\in{D},\, D''\in\mathcal{D}_{\mu{m}-\ell}\}\leq{R}. 
 \end{align*}
 The existence of $R$ is guaranteed. Indeed, one can choose 
 \begin{align*}
  R=\sup_{D''\in\mathcal{D}_{\mu{m}-\ell}}\|f_{D''}\|_{\mathcal{H}}\cdot\max_{(x,y)\in{D}}\sqrt{k(x,x)}<\infty, 
 \end{align*}
 because the RKHS norm of $f_{D''}$ is uniformly bounded above for $D''\in\mathcal{D}_{\mu{m}-\ell}$ and $D$ is a finite
 set. For the contaminated dataset $D'=\{(x_i',y_i')\,:\,i\in[m]\}\in\mathcal{D}_{\mu{m}-\ell}$, let us define the index sets 
 $I_{\pm}',I_{\mathrm{in},\pm}^{'}$ and $I_{\mathrm{out},\pm}^{'}$ for each label by 
\begin{align*}
 I_{\pm}'&=\{i\in[m]\,:\,y_i'=\pm1\},\\
 I_{\mathrm{in},\pm}'&=\{i\in{}I_{\pm}' \,:\, |f_{D'}(x_i')|\leq{}R\}, \\ 
 I_{\mathrm{out},\pm}'&=\{i\in{}I_{\pm}'\,:\, |f_{D'}(x_i')|>R\}. 
\end{align*}
For any non-contaminated sample $(x_i,y_i)\in{D}$, we have $|f_{D'}(x_i)|\leq{R}$. 
Hence, $(x_i',y_i')\in{D'}$ for $i\in{I}_{\mathrm{out},\pm}'$ should be an outlier that is not included in $D$.  This
 fact leads to  
\begin{align*}
& |I_{\mathrm{out},+}'|+ |I_{\mathrm{out},-}'|\,\leq\,\mu{m}-\ell,\\
& |I_{\mathrm{in},\pm}'|\,\geq\,|I_{\pm}|-(\mu{m}-\ell)\geq (r-\mu)m+\ell. 
\end{align*}

Based on the argument above, we prove two propositions: 
\begin{enumerate}
 \item The function $\zeta(b)$ is increasing for $b>R$. 
 \item The function $\zeta(b)$ is decreasing for $b<-R$. 
\end{enumerate}
 In addition, for any $D'\in\mathcal{D}_{\mu{m}-\ell}$, the Lipschitz constant of
 $\zeta(b)$ is greater than or equal to $1/m$ for $R<|b|$. 

Let us prove the first statement. If $b>R$ holds, we have 
\begin{align}
\label{eqn:CVaR_b_pos}
 R-b <  \min\{r_i'(b):i\in{I_{\mathrm{in},-}'}\}
\end{align}
from the definition of the index set $I_{\mathrm{in},-}'$. 
Let us consider two cases: 
 (i) for all $i\in{T_b}$, $R-b<r_i'(b)$ holds, and 
 (ii) there exists an index $i\in{}T_b$ such that $r_i'(b)\leq{R-b}$. 

For a fixed $b$ such that $b>R$, let us assume (i) above. 
Then, for any index $i$ in ${I}_{+}'\cap{T_b}$, we have $R<-f_{D'}(x_i')$, meaning that
$i\in{}I_{\mathrm{out},+}'$. 
Hence, the size of the set ${I}_{+}'\cap{T_b}$ is less than or equal to $\mu{m}-\ell$. 
Therefore, the size of the set ${I}_{-}'\cap{T_b}$ is greater than or equal to
$(\nu-\mu)m-(\mu{m}-\ell)=(\nu-2\mu)m+\ell$. The first inequality of \eqref{eqn:breakdown_point_bias} leads to
$(\nu-2\mu)m+\ell>\mu{m}-\ell$. 
Therefore, in the set $T_b$, the number of negative samples is more than the number of positive samples. 

For a fixed $b$ such that $b>R$, let us assume (ii) above. 
Due to the inequality~\eqref{eqn:CVaR_b_pos}, for any index $i\in{}I_{\mathrm{in},-}'$, 
the negative margin $r_i'(b)$ is at the top $\nu{m}$ of those ranked in the descending order. 
Hence, the size of the set ${I}_{-}'\cap{T_b}$ is greater than or equal to
$|I_{\mathrm{in},-}'|-\mu{m}\geq(r-2\mu)m$. 
Therefore, the size of the set ${I}_{+}'\cap{T_b}$ is less than or equal to $(\nu-\mu)m-(r-2\mu)m=(\nu-r+\mu)m$. 
The second inequality of \eqref{eqn:breakdown_point_bias} leads to $(\nu-r+\mu)m<(r-2\mu)m$. 
Also in the case of (ii), the negative label dominates the positive label in the set $T_b$. 

For negative (resp. positive) samples, the negative margin is expressed as $r_i'(b)=u_i+b$ (resp. $r_i'(b)=u_i-b$) 
with a constant $u_i\in\mathbb{R}$. Thus, the continuous piecewise linear function $\zeta(b)$ is expressed as 
\begin{align*}
 \zeta(b)=\frac{v_{b}+b\cdot{}w_b}{m}, 
\end{align*}
 where $v_{b},w_{b}\in\Rbb$ are constants as long as $T_b$ is unchanged. 
 As proved above, $w_b$ is a positive integer, 
 since negative samples are more than positive samples in $T_b$, when $b>R$. 
 As a result, the optimal solution of the bias term should satisfy 
 \begin{align*}
  \sup\{b_{D'}:D'\in\mathcal{D}_{\mu{m}-\ell}\} \leq R. 
 \end{align*}
 In the same manner, one can prove the second statement by using the fact that $b<-R$ is a sufficient condition of 
 \begin{align*}
  R+b <   \min\{r_i'(b): {i\in{I_{\mathrm{in},+}'}}\}. 
 \end{align*}
 Then, we have
 \begin{align*}
  \inf\{b_{D'}:D'\in\mathcal{D}_{\mu{m}-\ell}\} \geq -R. 
 \end{align*}
 In summary, we obtain
 \begin{align*}
  \sup\{|b_{D'}| : {D'\in\mathcal{D}_{\mu{m}-\ell}}\}
  \leq{}R<\infty. 
 \end{align*}
\end{proof}

\subsection{Proof of Theorem~\ref{theorem:bounded_kernel_bounded_bias}}
\label{appendix:proof_bounded_kernel_breakdown_bias}
\begin{proof}
We use the same notation as in the proof of Theorem~\ref{theorem:bounded_bias} in Appendix~\ref{appendix:proof_breakdown_bias}. 
Note that inequality \eqref{eqn:key_inequality} holds under the assumption of Theorem~\ref{theorem:bounded_kernel_bounded_bias}. 
The reproducing property of the RKHS inner 
product yields 
\begin{align*}
 |f_{D'}(x_i')|
 \,\leq\,
 \|f_{D'}\|_\mathcal{H}\sqrt{k(x_i',x_i')}
 \,\leq\,
 \sup_{D''\in\mathcal{D}_{\mu{m}}}\!\!\|f_{D''}\|_\mathcal{H}\cdot\sup_{x\in\mathcal{X}}\sqrt{k(x,x)}<\infty
\end{align*}
 for any $D'=\{(x_i',y_i')\,:\,i\in[m]\}\in\mathcal{D}_{\mu{m}}$ due to the boundedness of
 the kernel function and inequality~\eqref{eqn:key_inequality}. 
 Hence, for a sufficiently large $R\in\Rbb$, the sets $I_{\mathrm{out},+}'$ and $I_{\mathrm{out},-}'$ become empty for
 any $D'\in\mathcal{D}_{\mu{m}}$. 

Under inequality \eqref{eqn:CVaR_b_pos}, suppose that $R-b<r_i'(b)$ holds for all 
$i\in{T_b}$. Then, for $i\in{I}_{+}'\cap{T_b}$, we have $R<-f_{D'}(x_i')$. Thus, $i\in{}I_{\mathrm{out},+}'$ holds. 
Since $I_{\mathrm{out},+}'$ is the empty set, ${I}_{+}'\cap{T_b}$ is also the empty set. 
Therefore, ${T_b}$ has only negative samples. 
Let us consider the other case; i.e., there exists an index $i\in{}T_b$ such that $r_i'(b)\leq{R-b}$. 
 Assuming that $\nu-\mu<2(r-2\mu)$, one can prove that the negative labels dominate the positive labels in $T_b$ in the same
 manner as the proof of Theorem~\ref{theorem:bounded_bias}. 
Eventually, for any $D'\in\mathcal{D}_{\mu{m}}$, 
the function $\zeta(b)$ is strictly increasing for $b>R$. 
In the same way, one can prove that $\zeta(b)$ is strictly decreasing for $b<-R$. 
Moreover, for any $D'\in\mathcal{D}_{\mu{m}}$ and for $|b|>R$, 
one can prove that the absolute value of the slope of $\zeta(b)$ is bounded below by
$1/m$ according to the argument in the proof of Theorem~\ref{theorem:bounded_bias}. 
As a result, we obtain 
$\sup\{|b_{D'}|:D'\in\mathcal{D}_{\mu{m}}\}\leq{}R$. 
\end{proof}




\bibliographystyle{plain}

\end{document}